\newcommand{\cS}{\mathcal{S}}
\newcommand{\cD}{\mathcal{D}}
\newcommand{\cA}{\mathcal{A}}
\newcommand{\RR}{\mathbb{R}}
\newcommand{\EE}{\mathbb{E}}
\newcommand{\PP}{\mathbb{P}}
\theoremstyle{plain}
\newtheorem{theorem}{Theorem}[section]
\newtheorem{proposition}[theorem]{Proposition}
\newtheorem{lemma}[theorem]{Lemma}
\theoremstyle{definition}
\newtheorem{assumption}[theorem]{Assumption}
\theoremstyle{remark}
\definecolor{darkblue}{rgb}{0, 0, 0.55}
\definecolor{mydarkblue}{rgb}{0,0.08,0.45}
\title{Average-DICE: Stationary Distribution Correction by Regression}
\author{Fengdi Che\textsuperscript{1,2}, Bryan Chan\textsuperscript{1,2}, Chen Ma\textsuperscript{1,2}, A. Rupam Mahmood\textsuperscript{1,2,3}}
\keywords{Off-Policy Evaluation, State Distribution Correction, Importance Sampling, Distribution Shift.} 
\begin{document}
\makeCover  

\maketitle  

\begin{abstract}
Off-policy policy evaluation (OPE), an essential component of reinforcement learning, has long suffered from stationary state distribution mismatch, undermining both stability and accuracy of OPE estimates.
While existing methods correct distribution shifts by estimating density ratios, they often rely on expensive optimization or backward Bellman-based updates and struggle to outperform simpler baselines.
We introduce Average-DICE, a computationally simple Monte Carlo estimator for the density ratio that averages discounted importance sampling ratios, providing an unbiased and consistent correction.
Average-DICE extends naturally to nonlinear function approximation using regression, which we roughly tune and test on OPE tasks based on Mujoco Gym environments and compare with state-of-the-art density-ratio estimators using their reported hyperparameters.
In our experiments, Average-DICE is at least as accurate as state-of-the-art estimators and sometimes offers orders-of-magnitude improvements. However, a sensitivity analysis shows that best-performing hyperparameters may vary substantially across different discount factors, so a re-tuning is suggested.

\end{abstract}

\section{Introduction}
Off-policy evaluation (OPE) aims to estimate the expected cumulative return of a target policy using data collected from a different behaviour policy. 
Assessing a policy with pre-collected data before deployment is crucial, as executing an unqualified policy can lead to undesirable consequences \citep{levine2020offline}, including life-threatening risks in applications such as surgical robotics and self-driving vehicles. 
A straightforward approach is to directly average the observed rewards.
However, distribution shift introduces bias into value estimation, and even temporal difference methods cannot provide an unbiased evaluation of the target policy under such shifts \citep{sutton2016emphatic}.

A common approach to addressing distribution shift is importance sampling (IS) \citep{precup2001off}, which reweights samples based on the ratio between two distributions to provide an unbiased estimation. 
However, when correcting cumulative returns along a trajectory, IS requires multiplying these ratios over multiple steps, leading to high variance — a problem known as the curse of the horizon. 
To mitigate this, researchers have explored marginalized IS ratios for stationary state distributions \citep{liu2018breaking}. 
Current estimators leverage the recursive property of stationary distributions to formulate optimization tasks. 
This recursion results in a backward Bellman-based update, where the value at the next step depends on the current step’s value \citep{hallak2017consistent}. 
However, the expectation in the backward Bellman recursion cannot be unbiasedly evaluated without double sampling. Moreover, off-policy Bellman updates with function approximation, known as the deadly triad \citep{baird1995residual}, are prone to instability and typically lack convergence guarantees.

Several studies have proposed novel optimization frameworks, such as primal-dual optimization \citep{nachum2019dualdice} or multistage optimization \citep{uehara2020minimax}, to avoid directly minimizing the backward Bellman error. 
The optimal solutions corresponding to their novel losses equal to the desired state density ratio, proven in the tabular case \citep{liu2019off}. 
However, these methods introduce additional complexity to the estimation process.
This raises an important question: Can we revisit Monte Carlo methods to develop a new estimator that is both theoretically sound and computationally simpler? 
Previously, a state distribution corrector based on the Monte Carlo expansion of the stationary distribution was developed to account for the missing discount factor in stationary state distributions for policy gradient algorithms \citep{che2023correcting}. However, the idea was not explored in the off-policy setting.

In this paper, we propose a novel estimator for the stationary state distribution ratio,
called the average state distribution correction estimation (Average-DICE).
It leverages the Monte Carlo expansion rather than the recursive property used in prior approaches.
Our approach computes the average of all discounted importance sampling ratio products corresponding to a given state in the dataset. 
We prove that this method provides a consistent estimation of the density ratio between the discounted target and the undiscounted behaviour stationary state distributions. 
Also, it gives an unbiased estimation of any function.

Furthermore, our estimator can be learned via a least squares regression task, offering a simple and effective approach to approximating the state distribution ratio. 
In the case of linear function approximation, we establish its asymptotic convergence to the same fixed point as minimizing the mean squared error with the exact density ratio, under standard assumptions used in temporal difference (TD) convergence analysis \citep{yu2015convergence}.

To evaluate our estimator, we conduct experiments on several discrete classic control tasks and continuous MuJoCo tasks \citep{todorov2012mujoco}, using a pre-collected fixed off-policy dataset with batch updates. 
Our estimator achieves dominant performance on most tasks when appropriately tuned for the required discount factor and remains competitive on others. 
Additionally, it demonstrates the fastest convergence to a stable value, making it practical for integration into other algorithms. 
However, our algorithm is sensitive to changes in the discount factor, and we recommend re-tuning it for each discount setting to ensure optimal performance.

\section{Background}
\paragraph{Notation} 
We let $\Delta(\mathcal{X})$ denote the set of probability distributions over a finite set $\mathcal{X}$. 
Let $\RR$ denote the set of real numbers, $\mathbb{N}$ be the set of non-negative integers, $\mathbb{N}^+$ be the set of positive integers, and $\mathds{1}$ be the indicator function.

\paragraph{Markov Decision Process}
We consider finite Markov decision process (MDP) \citep{sutton2018reinforcement} defined by a tuple $M=\langle \mathcal{S} \cup \{\varsigma\},\mathcal{A},r, P,\nu,\gamma\rangle$, where 
$\cS$ is a finite  state space,
$\varsigma$ is a termination state, 
$\cA$ is the action space, 
$r:\cS\times\cA\rightarrow \RR$ is the reward function,
$P:\cS\times\cA\rightarrow \Delta(\cS)$ is the transition matrix, 
$\nu\in \Delta(\cS)$ is the distribution of the initial state, and
$\gamma \in (0,1)$ is the discount factor.   
At each step $j$, the agent applies an action $A_j$ sampling from a policy $\pi: \cS \rightarrow \Delta(\cA)$ at state $S_j$.
Then, the agent receives a reward $R_j$ and transits to the next state $S_j'$. Our paper focuses on episodic tasks, where a trajectory,
denoted by $\tau = \{S_j,A_j,R_j,S_j'\}_{j=0}^{T-1}$, 
ends at the termination state $\varsigma$ at step $T$. 
We define the random variable $T \in \mathbb{N}^+$ as \emph{the length of the trajectory}. 

The agent's goal is to evaluate a policy $\pi$ by estimating the expected discounted cumulative returns. 
The expected discounted cumulative return, denoted by $J(\pi)$, is defined as 
\begin{equation*}
\abovedisplayskip=0.2pt
J(\pi) = (1-\gamma)\EE_\pi \left[ \sum_{j=0}^{\infty} \gamma^j r(S_j,A_j) \right],
\end{equation*}
where we use $\EE_\pi$ to denote the expectation under the distribution induced by 
$\pi$ and the environment.  
Notice that after reaching the termination state, the probability of any state showing up is zero, that is, $\PP_{\pi}(S_j =s) = 0\, , \forall s \in \cS$, if $T \le j$ and rewards also equal zero.

The \emph{Q-value} represents the expected cumulative rewards starting from a state-action pair $(s,a)$ following a policy $\pi$, defined as
\begin{equation}
\abovedisplayskip=0.2pt
q_\pi(s, a) = \EE_\pi\left[ \sum_{j=0}^{\infty} \gamma^j r(S_j,A_j) \bigg| S_0=s, A_0=a\right]\, .
\belowdisplayskip=-2pt
\end{equation}

\paragraph{Off-Policy Evaluation}
We consider evaluating a target policy $\pi$ using a dataset $\cD$ that consists of $K$ trajectories as 
$\{\tau_i\}_{i=1}^K = \left \{ \left \{S_j^i,A_j^i,R_j^i,(S_j')^i ) \right\}_{j=0}^{T^i-1} \right \}_{i=1}^K$.
When trajectories are collected under a \emph{behaviour policy},
denoted by $\mu$,
differing from the target policy, we call the learning off-policy.

To correct this distribution shift, the \emph{importance sampling (IS) ratio}
$\rho(a|s) = \frac{\pi(a|s)}{\mu(a|s)}$ is often used to obtain unbiased estimators. 
The \emph{IS-ratio product}, 
denoted by $\rho_{0:j-1} = \prod_{k=0}^{j-1}\rho(A_k|S_k)$, 
adjusts the distribution of an entire trajectory, $\{S_0,A_0,\cdots,S_j\}$, from the behaviour policy to the target one. 
Notice the product is initialized at one, denoted as
$\rho_{0:-1} =1$,
with some abuse of notations.

This dataset can also be expressed in terms of individual transitions as 
$\mathcal{D}=\{(S_t,A_t,R_t,S_t',\textrm{time}_t,\rho_{\textrm{prod},t})\}_{t=0}^{n-1}$ where
$n$ is the dataset size,
$t$ is the index for each transition,
$\textrm{time}_t$ represents the step of $s_t$ in its trajectory and $\rho_{\textrm{prod},t}$ for the corresponding IS products $\rho_{0:\textrm{time}_t-1}$ until $s_t$. 
To further simplify notation, let $I_s$ indicate the set of step $t$ such that $S_t=s$.

Off-policy TD estimates Q-values by $\hat{q}_{\theta}$ and takes a semi-gradient of the empirical temporal difference errors, which equals
\begin{equation}
\abovedisplayskip=2pt
    \min \mathcal{L}(\theta;\cD) = \frac{1}{n} \sum_{t=0}^{n -1 } \left ( R_t + \gamma \hat{q}_{\theta}\left (S_t',A_t' \right ) - \hat{q}_{\theta}(S_t,A_t) \right )^2,
    \belowdisplayskip=2pt
\end{equation}
where the next action used for the bootstrapping target is sampled from the target policy, that is,
$A_t' \sim \pi(\cdot|S_t')$ .
TD evaluates the target policy by expected value estimation of initial state-action pairs, that is, 
$(1-\gamma) \EE_{S_0 \sim \nu, A_0 \sim \pi(\cdot|S_0)} \left [ \hat{q}_{\theta} (S_0,A_0)\right ] $. However, the dataset’s state distribution shift is not corrected from the behaviour policy to the target policy, leading to bias in the estimation.

\paragraph{Irreducible Markov Chain}
The Markov decision process under a policy $\pi$ forms a Markov chain, denoted by $\langle \mathcal{S} \cup \{\varsigma\}, P_{\pi}\rangle$,
where $P_{\pi}(s'|s) = \sum_{a\in\mathcal{A}}\pi(a|s)P(s'|s,a)$ for any $s, s' \in \cS$. 
A Markov chain is said to be \emph{irreducible}, 
if for any two states, $s$ and $s'$, the probability of transiting between these two states is positive at some time step, 
that is, $P_{\pi}(S_j =s' \text{ for some } j>0 |S_0=s)>0$ and $P_{\pi}(S_j =s \text{ for some } j>0 |S_0=s')>0$.

The \emph{recurrence time} of a state $s$, denoted by $\tau_s^+(s)$,
is defined as the time elapsed to revisit a state $s$, that is, $\tau^+_s(s) = \min \{j>0:S_j =s ,S_0 =s\}$.
A \emph{positive recurrent} state has a finite expected recurrence time, that is, $\mathbb{E}_\pi[\tau^+_s(s)] < \infty$. Note that no assumptions are made in the background section; the terms irreducibility and positive recurrence are presented solely for later use.

\paragraph{Stationary State Distribution}
The discounted stationary state distribution, denoted by $d_{\pi,\gamma}$, is defined as the distribution satisfying the following equation for all states $s' \in \mathcal{S}$:
\begin{equation}
\sum_{s\in\mathcal{S}} d_{\pi,\gamma}(s)[\gamma P_{\pi}(s'|s)+(1-\gamma) \nu(s')] = d_{\pi,\gamma}(s') .
\label{eq:backward}
\belowdisplayskip=0.2pt
\end{equation} 
A common analytical form of the discounted stationary distribution can be written as
\begin{equation}
\abovedisplayskip=0.2pt
d_{\pi,\gamma}(s) = (1-\gamma)\sum_{j = 0}^\infty \gamma^j \PP_{\pi}(S_j=s).
\label{eq:def_stationary}
\belowdisplayskip=0.2pt
\end{equation}

Given the form of the discounted stationary distribution, the expected discounted cumulative return, $J(\pi)$, can also be written as 
\begin{equation}
\abovedisplayskip=0pt
    J(\pi) = \sum_{s\in\cS} d_{\pi,\gamma}(s) r_{\pi}(s),
    \label{eq:form_obj}
\end{equation}
where $r_{\pi}(s) = \EE_{A \sim \pi(\cdot|s)}[r(s,A)]$.

The undiscounted stationary distribution, denoted by $d_{\pi}$ is defined as the distribution satisfying $\sum_{s\in\mathcal{S}}d_{\pi}(s)P_{\pi}(s'|s)=d_{\pi}(s')$. 
This distribution is also regarded as the limiting distribution of state-action visitation at each step. However, in episodic tasks, the step count does not approach infinity. To define a limit distribution in this setting, the trajectory is considered to restart from the initial state distribution upon termination. Note that this restart does not occur in practice and is introduced purely for definitional purposes.

Repeating the transition $n$ steps can give $K$ terminated trajectory and one incomplete trajectory, denoted by  $\left \{ \left \{\left(S_j^i,A_j^i,R_j^i,(S_j')^i \right) \right\}_{j=0}^{T^i-1} \right \}_{i=1}^K \cup \left\{\left(S_j^{K+1},A_j^{K+1},R_j^{K+1},(S_j')^{K+1} \right) \right\}_{j=0}^{n - \sum_{k=1}^K T^k} $. We relabel the transition by $t$ as $\left \{(S_t,A_t,R_t,S_t')\right \}_{t=0}^{n-1} $.
The undiscounted stationary distribution has multiple analytical forms stated in Sutton and Barto \citeyearpar{sutton2018reinforcement} and  Grimmett and Stirzaker (\citeyear{grimmett2020probability}, Theorem 6.4.3), summarized in a lemma from \citep{che2023correcting}.
\begin{lemma}[Forms of Undiscounted Stationary Distribution] \label{lemma:form_stat_dist}
Under the irreducibility of the Markov chain and positive recurrences of all states under all policies $\pi$, we have the following:
  \begin{equation}
  \abovedisplayskip=0pt
      d_{\pi}(s) = \lim_{n \to \infty} \frac{1}{n} \sum_{t=0}^{n-1} \PP_{\pi}(S_t=s)  = \frac{1}{\EE_\pi[\tau^+_s(s)]}.
  \end{equation}
\end{lemma}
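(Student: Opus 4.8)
The plan is to prove the two equalities separately, relying throughout on the classical fact that an irreducible finite Markov chain admits a unique stationary distribution (indeed, on a finite state space irreducibility already forces positive recurrence, so the second hypothesis is present mainly for emphasis and to match the general-chain statement). I would first establish the renewal identity $d_{\pi}(s) = 1/\EE_\pi[\tau_s^+(s)]$ by the excursion-measure construction of Grimmett and Stirzaker. Fix a reference state $s$ and, for each $s'\in\cS$, define the expected number of visits to $s'$ during one excursion from $s$ back to $s$, namely $\phi_s(s') = \EE_\pi[\sum_{j=0}^{\tau_s^+(s)-1}\mathds{1}\{S_j=s'\}\mid S_0=s]$. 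The key step is to check that $\phi_s$ is an invariant measure, $\sum_{s'}\phi_s(s')P_\pi(s''|s')=\phi_s(s'')$, which follows from the strong Markov property applied at the successive visits to $s$ together with a shift of the summation index. Summing over $s'$ telescopes to $\sum_{s'}\phi_s(s')=\EE_\pi[\tau_s^+(s)]$, while by construction $\phi_s(s)=1$ since $s$ is visited exactly once per excursion. Normalizing $\phi_s$ to a probability distribution and invoking uniqueness identifies $d_{\pi}=\phi_s/\EE_\pi[\tau_s^+(s)]$; evaluating at $s$ yields the second equality.

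For the first equality I would show $\frac{1}{n}\sum_{t=0}^{n-1}P_\pi^t\to\Pi$, where $\Pi$ is the rank-one matrix whose every row equals $d_{\pi}$, and then apply the initial distribution $\nu$ on the left to recover $\frac{1}{n}\sum_{t=0}^{n-1}\PP_\pi(S_t=s)\to d_{\pi}(s)$. The cleanest route is spectral: by Perron--Frobenius the eigenvalue $1$ of the stochastic matrix $P_\pi$ is simple with left eigenvector $d_{\pi}$, and every other eigenvalue $\lambda$ lies in the closed unit disk with $\lambda\neq 1$. Since $\frac{1}{n}\sum_{t=0}^{n-1}\lambda^t\to 0$ for all such $\lambda$, the Cesàro average annihilates every Jordan block except the one at $\lambda=1$, leaving exactly the projection $\Pi$ onto the stationary eigenspace.

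The main obstacle I anticipate is periodicity: when $P_\pi$ is periodic the probabilities $\PP_\pi(S_t=s)$ oscillate and do not converge, so the statement genuinely requires the Cesàro average rather than a plain limit of powers. The delicate point is therefore to confirm that the eigenvalues on the unit circle other than $1$ contribute nothing after averaging, which is precisely what the identity $\frac{1}{n}\sum_{t=0}^{n-1}\lambda^t\to 0$ secures and what a naive $n\to\infty$ limit would fail to do. An alternative that bypasses the spectrum is the renewal theorem: the almost-sure time-average fraction of visits to $s$ converges to $1/\EE_\pi[\tau_s^+(s)]$, and passing to expectations by bounded convergence recovers the Cesàro statement, at the cost of replacing the spectral bookkeeping with renewal-theoretic control of excursion lengths.
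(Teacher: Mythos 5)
The paper never proves this lemma: it is imported wholesale from Sutton and Barto, Grimmett and Stirzaker (Theorem 6.4.3), and Che et al.\ (2023), so there is no in-paper argument to compare against. Your proposal is essentially the standard textbook proof that those references contain, and it is correct in outline. The excursion measure $\phi_s(s')=\EE_\pi\bigl[\sum_{j=0}^{\tau_s^+(s)-1}\mathds{1}\{S_j=s'\}\mid S_0=s\bigr]$ is indeed invariant by the strong Markov property, has total mass $\EE_\pi[\tau_s^+(s)]$ and satisfies $\phi_s(s)=1$, so uniqueness of the stationary distribution of an irreducible finite chain gives $d_\pi(s)=1/\EE_\pi[\tau_s^+(s)]$; and the Ces\`aro limit $\frac{1}{n}\sum_{t=0}^{n-1}P_\pi^t\to \Pi$ with every row of $\Pi$ equal to $d_\pi$ gives the first equality for any initial distribution $\nu$. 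One step in your spectral argument needs tightening: you assert that the Ces\`aro average annihilates \emph{every} Jordan block at eigenvalues $\lambda\neq 1$. For $|\lambda|<1$ this is immediate, but for $|\lambda|=1$ with $\lambda\neq 1$ a nontrivial Jordan block contributes terms of order $t\lambda^{t-1}$ whose Ces\`aro averages diverge, so the identity $\frac{1}{n}\sum_{t=0}^{n-1}\lambda^t\to 0$ alone does not suffice. You must additionally invoke the fact that the peripheral spectrum of a stochastic matrix is semisimple, which follows from the uniform boundedness of $\|P_\pi^t\|$; with that in hand the argument closes. The renewal-theoretic alternative you sketch at the end (almost-sure visit frequencies plus bounded convergence) avoids this issue entirely and is the cleaner route here, and it is also closer in spirit to the ergodic theorem the paper actually uses elsewhere (Appendix A). Finally, note that in this paper's episodic setting $P_\pi$ must be read as the restarted chain on $\cS$, which is exactly the object to which your hypotheses of irreducibility and positive recurrence apply.
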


\section{Related Works}
TD with linear function approximation converges when data is sampled as trajectories under the target policy \citep{tsitsiklis1996analysis}, but linear TD with off-policy state distribution is not guaranteed to converge \citep{che2024target}. This issue is called the deadly triad. Meanwhile, the policy estimation is biased under the state distribution shift.

The data distribution can be corrected by importance sampling \citep{precup2000eligibility,precup2001off}. However, these approaches suffer from high variance when correcting the distributions of trajectories with products of IS ratios. Later papers work on estimating state distribution ratios to avoid the ratio product \citep{hallak2017consistent,yang2020off,fujimoto2021deep}.

The state distribution ratio can be estimated based on the backward recursion for the stationary distribution shown in Equation \ref{eq:backward}. A backward Bellman recursion for the density ratio $w(s)$ can then be built for all state $s'$, and the temporal difference error for the density ratio estimator, denoted by $TD(s')$, is defined as
\begin{equation*}
\abovedisplayskip=0.2pt
    TD(s'):=\mathbb{E}_{(S,A,S') \sim d_{\mu}} \left [ -w(S') + \gamma w(S) \frac{\pi(A|S)}{\mu(A|S)} | S'=s'\right ] + (1-\gamma) \rho(s').
\end{equation*}
This temporal error equals zero, if $w(s)=\frac{d_{\pi,\gamma}(s)}{d_{\mu}(s)}$ provided that non-zero target policy $\pi(a|s)>0$ implying the behaviour policy being non-zero $\mu(a|s)>0$ for all state-action pairs \citep{nachum2019dualdice}.
COP-TD \citep{hallak2017consistent,gelada2019off} minimizes the above temporal difference (TD) error. However, a backward TD estimate cannot be unbiasedly computed from a dataset without double sampling unless the behaviour policy is concentrated on a single state. Meanwhile, the algorithm lacks a convergence guarantee. 

Several other works \citep{liu2018breaking,liu2019off,uehara2020minimax} design novel loss functions based on the recursive properties of the state distribution instead of directly minimizing the TD error. These losses reach zero if and only if the solution is the density ratio, providing new multi-stage optimization objectives for ratio approximation. On the other hand, DualDice \citep{nachum2019dualdice} introduces a primal-dual optimization framework by reformulating the problem with the Fenchel conjugate. GenDice \citep{zhang2020gendice} estimates the density ratio $w(s)$ and minimizes the f-divergence between the estimated and true stationary distributions $d_{\mu}(s)w(s)$ and $d_{\pi,\gamma}(s)$, showing greater stability than DualDice for high discount factors but lacking convex-concavity. However, these multi-stage or primal-dual optimization techniques lack the convergence guarantee, and the training is less stable with multiple variables.

GradientDice \citep{zhang2020gradientdice} replaces f-divergence in GenDice with a weighted L$2$-norm, ensuring convex-concave and convergence properties under linear function approximation. 
BestDice \citep{yang2020off} unifies these multi-stage and primal-dual methods into a general objective, identifying optimal regularization choices in their BestDice algorithm. However, the learning stability still needs improvement.

Successor Representation Distribution Correction Estimation (SR-DICE) \citep{fujimoto2021deep} builds on successor features and derives a loss equivalent to minimizing the mean squared error to the density ratio under linear function approximation. It achieves lower policy evaluation errors than other density estimators but still underperforms deep off-policy TD. Meanwhile, state-action representation features and successor features require pre-training, introducing additional approximation errors and increasing computation.

\section{Distribution Corrector}
We derive a novel expression for the state density ratio, leading to a consistent estimator. 
This estimator computes the average of discounted IS-ratio products for each state using an off-policy dataset. 
Our algorithm, Average-DICE, is named for its averaging approach in approximating this estimator.
As the dataset size approaches infinity, our estimator converges to the true density ratio. 
Meanwhile, it corrects the distribution shift from the dataset’s sampling distribution to the target policy’s discounted stationary distribution, consequently providing an unbiased estimate for any function by reweighting each state by our estimator.

Recall that a dataset consists of $K$ trajectories and is presented as
$\mathcal{D}=\{(S_t,A_t,R_t,S_t',\textrm{time}_t,\rho_{\textrm{prod},t})\}_{t=0}^{n-1}$, 
where $\textrm{time}_t$ represents the step of $S_t$ in its trajectory and $\rho_{\textrm{prod},t}$ for the corresponding IS products until $S_t$.
$I_s$ indicates the set of label $t$ such that $S_t=s$.

We first assume the following necessary condition for applying marginalized importance sampling.

\begin{assumption}
    If $d_{\pi,\gamma} > 0$, then $d_{\mu} > 0$.
    \label{assump:stationary_abso_cont}
\end{assumption}
This assumption is made for all distribution correction estimators \citep{yu2015convergence,zhang2020gradientdice}, requiring the off-policy distribution to cover the target distribution.

Also, for episodic tasks, it is normal to consider the trajectory length to have a finite expectation.
\begin{assumption}
    $$\EE_{\mu} [T] < \infty.$$
    \label{assump:bound_traj_len}
\end{assumption}
\vspace*{-4mm}

Now we are ready to present the novel formulation of the density ratio.
\begin{proposition}
\label{thm:density-ratio}[Consistency]
Given
\begin{itemize}
    \item a finite Markov decision process,
    \item a dataset $\cD$ collected under a behaviour policy $\mu$, and
    \item a target policy $\pi$
\end{itemize} 
such that Assumption \ref{assump:stationary_abso_cont} and \ref{assump:bound_traj_len} are satisfied, 
then for state $s$ with $$d_{\pi,\gamma}(s) >0,$$
we have the density ratio equal
\begin{equation}
    \frac{d_{\pi,\gamma}(s)}{ d_{\mu}(s)} = \lim_{n \to \infty}  \frac{n}{K}(1-\gamma)\mathbb{E}_{t \sim I_s} [\gamma^{\textrm{time}_t}  \rho_{\textrm{prod},t}], \label{eq:density-ratio}
\end{equation}
where $n$ is the number of transitions, and 
$K$ denotes the number of trajectories.
\end{proposition}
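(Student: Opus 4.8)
The plan is to rewrite the right-hand side of \eqref{eq:density-ratio} as a product of factors whose almost-sure limits I can treat with two separate laws of large numbers, and then recombine. Since $\EE_{t\sim I_s}[\gamma^{\textrm{time}_t}\rho_{\textrm{prod},t}] = \tfrac{1}{|I_s|}\sum_{t\in I_s}\gamma^{\textrm{time}_t}\rho_{\textrm{prod},t}$, I would regroup the normalizations as
\[
\frac{n}{K}(1-\gamma)\,\EE_{t\sim I_s}\!\left[\gamma^{\textrm{time}_t}\rho_{\textrm{prod},t}\right] = (1-\gamma)\cdot\frac{n}{|I_s|}\cdot\frac{1}{K}\sum_{t\in I_s}\gamma^{\textrm{time}_t}\rho_{\textrm{prod},t}.
\]
The factor $n/K$ never needs its own limit; it is pure bookkeeping that re-pairs the two normalizations, sending $|I_s|/n$ to an occupancy law over the concatenated (restarting) chain and $\tfrac{1}{K}\sum$ to a strong law over i.i.d.\ complete trajectories.

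For the occupancy factor I would invoke Lemma \ref{lemma:form_stat_dist}. Writing $|I_s| = \sum_{t=0}^{n-1}\mathds{1}(S_t=s)$, the ergodic theorem for the restart chain (irreducible, and positive recurrent thanks to the finite expected episode length of Assumption \ref{assump:bound_traj_len}) gives $|I_s|/n \to d_\mu(s)$ almost surely. Assumption \ref{assump:stationary_abso_cont} turns $d_{\pi,\gamma}(s)>0$ into $d_\mu(s)>0$, so $n/|I_s| \to 1/d_\mu(s)$ is finite and well defined.

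For the IS-weighted factor, the conceptual core is the importance-sampling change of measure $\EE_\mu[\rho_{0:j-1}\mathds{1}(S_j=s)] = \PP_\pi(S_j=s)$, which holds because $\rho(a|s)=\pi(a|s)/\mu(a|s)$ is defined exactly when $\mu$ covers the support of $\pi$ at each state, so no $\pi$-reachable path is lost. Grouping the dataset sum by its $K$ i.i.d.\ complete trajectories, the per-trajectory contribution is $X=\sum_{j=0}^{T-1}\gamma^j\rho_{0:j-1}\mathds{1}(S_j=s)$. Since $S_j\in\cS$ forces $j<T$, the indicator kills every term past termination, so I may extend the sum to infinity; all summands being nonnegative, Tonelli allows swapping sum and expectation, and with \eqref{eq:def_stationary},
\[
\EE_\mu[X]=\sum_{j=0}^{\infty}\gamma^j\EE_\mu[\rho_{0:j-1}\mathds{1}(S_j=s)]=\sum_{j=0}^{\infty}\gamma^j\PP_\pi(S_j=s)=\frac{d_{\pi,\gamma}(s)}{1-\gamma}<\infty,
\]
which is finite (bounded by $1/(1-\gamma)$). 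As $X\ge0$ has finite mean, the strong law yields $\tfrac{1}{K}\sum_{i=1}^K X_i \to d_{\pi,\gamma}(s)/(1-\gamma)$ almost surely, and multiplying the three almost-sure limits recovers $d_{\pi,\gamma}(s)/d_\mu(s)$.

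I expect the main obstacle to be the careful coordination of the two limiting regimes along a single sample path, together with the leftover incomplete trajectory. Concretely, I must argue that $n\to\infty$ indeed forces $K\to\infty$ (again via the finite expected episode length of Assumption \ref{assump:bound_traj_len} and renewal reasoning) so that the trajectory-level strong law applies, and that the single partial trajectory's contribution to the numerator sum is an almost-surely finite term that vanishes after division by $K$. Establishing the change-of-measure identity cleanly, including the convention that post-termination terms are zero, is the remaining point requiring care.
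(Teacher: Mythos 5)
Your proposal is correct and follows essentially the same route as the paper's proof: the same factorization into an occupancy term $n/|I_s|$ handled by the ergodic theorem for the restarting chain (giving $1/d_\mu(s)$) and a per-trajectory IS-weighted average handled by a law of large numbers over i.i.d.\ episodes (giving $d_{\pi,\gamma}(s)/(1-\gamma)$). If anything you are slightly more careful than the paper — you correctly invoke the strong law where the paper writes ``central limit theorem,'' you make the change-of-measure identity and the Tonelli step explicit, and you flag the vanishing contribution of the final incomplete trajectory.
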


We derive a consistent estimator based on the above proposition, defined as
\begin{equation}
    \label{eq:derived_estimator}
    c_{\cD}(s) =  \frac{n}{K}(1-\gamma)\mathbb{E}_{t \sim I_s} [\gamma^{\textrm{time}_t}  \rho_{\textrm{prod},t}].
\end{equation}
The expectation is taken over step where state $s$ appears and can be expressed as
\begin{equation}
   \mathbb{E}_{t \sim I_s} [\gamma^{\textrm{time}_t}  \rho_{\textrm{prod},t}] = \frac{\sum_{t=0}^{n-1} \gamma^{\textrm{time}_t}  \rho_{\textrm{prod},t} \mathds{1}[S_t=s ]}{\sum_{t=0}^{n-1}  \mathds{1}[S_t=s ]},
\end{equation}
where $\mathds{1}[S_t=s ]$ equals to one if $s$ appears at step $t$. The denominator counts the number of times state $s$ occurs in the dataset, while the numerator sums the corresponding discounted IS-ratio products. Thus, the expectation in our estimator effectively averages all discounted IS products associated with state $s$.

Our main theorem shows that reweighting each data by our estimator gives an unbiased estimator for any function.

\begin{theorem}[Unbiasedness]
\label{thm:unbiasedness}
Given
\begin{itemize}
    \item a finite Markov decision process,
    \item a dataset $\cD$ collected under a behaviour policy $\mu$, and
    \item a target policy $\pi$
\end{itemize} 
such that Assumption \ref{assump:stationary_abso_cont} and \ref{assump:bound_traj_len} are satisfied, 
reweighting data by our average correction gives unbiased estimation for any function $f:\cS \to \RR$, that is,
\begin{equation}
\EE_{\cD} \left [\EE_{S \sim \cD}\left [c_{\cD}(S) f(S)\right ] \right ]  = \EE_{S \sim d_{\pi,\gamma}}[f(S)],
\end{equation}
where $\EE_{\cD}$ means expectation over trajectories sampled under the behaviour policy, and
$\EE_{S \sim \cD}$ representing sampling states uniformly from the dataset.
\end{theorem}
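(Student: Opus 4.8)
The plan is to evaluate the inner expectation $\EE_{S\sim\cD}[c_{\cD}(S)f(S)]$ exactly for a fixed dataset, and only then average over $\cD$. Since $\EE_{S\sim\cD}$ draws a transition index uniformly from the $n$ transitions, $\EE_{S\sim\cD}[g(S)]=\frac1n\sum_{t=0}^{n-1}g(S_t)$. Writing $N(s)=\sum_t \mathds{1}[S_t=s]=|I_s|$ for the occurrence count and $G(s)=\sum_t \gamma^{\textrm{time}_t}\rho_{\textrm{prod},t}\mathds{1}[S_t=s]$ for the summed discounted IS products, the estimator reads $c_{\cD}(s)=\frac{n}{K}(1-\gamma)G(s)/N(s)$. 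First I would group the uniform sum over transitions by the value of the state, replacing $\frac1n\sum_t c_{\cD}(S_t)f(S_t)$ with $\frac1n\sum_{s}N(s)\,c_{\cD}(s)\,f(s)$. The key algebraic observation is that the per-state denominator $N(s)$ in $c_{\cD}$ cancels the occurrence count $N(s)$ produced by the grouping, while the factor $n/K$ cancels the $1/n$ from uniform sampling. What survives is the per-trajectory identity
\[
\EE_{S\sim\cD}\!\left[c_{\cD}(S)f(S)\right]=\frac{1-\gamma}{K}\sum_{t=0}^{n-1}\gamma^{\textrm{time}_t}\rho_{\textrm{prod},t}f(S_t)=\frac{1-\gamma}{K}\sum_{i=1}^{K}\sum_{j=0}^{T^i-1}\gamma^{j}\rho^i_{0:j-1}f(S^i_j),
\]
which holds for every realization of $\cD$ and, crucially, has exactly $K$ summands regardless of the random total length $n$.

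Next I would take the outer expectation $\EE_{\cD}$. Because the $K$ trajectories are i.i.d.\ under $\mu$ and each inner summand depends only on its own trajectory, linearity collapses the average to a single-trajectory expectation and the $K$ cancels:
\[
\EE_{\cD}\!\left[\EE_{S\sim\cD}[c_{\cD}(S)f(S)]\right]=(1-\gamma)\,\EE_{\mu}\!\left[\sum_{j=0}^{T-1}\gamma^{j}\rho_{0:j-1}f(S_j)\right].
\]
To connect this with Equation~\ref{eq:def_stationary}, I would extend the finite trajectory sum to an infinite series, using the convention $f(\varsigma)=0$ together with $S_j=\varsigma$ for $j\ge T$, so that $\sum_{j=0}^{T-1}=\sum_{j=0}^{\infty}$, and then interchange the expectation and the infinite sum. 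This is the main technical point: since $\cS$ is finite, $f$ is bounded, and with $\rho_{0:j-1}\ge 0$ the change of measure gives $\EE_{\mu}[\rho_{0:j-1}\mathds{1}[j<T]]=\PP_{\pi}(j<T)\le 1$, so the series is dominated by $\|f\|_\infty\sum_j\gamma^j=\|f\|_\infty/(1-\gamma)<\infty$; the discount factor alone secures absolute summability, while Assumption~\ref{assump:bound_traj_len} guarantees the dataset is almost surely finite so that every quantity above is well defined. Tonelli/dominated convergence then justifies the swap.

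Finally, I would apply per-decision importance sampling term by term. For each fixed $j$, multiplying the $\mu$-law of $(S_0,A_0,\dots,S_j)$ by $\rho_{0:j-1}$ replaces each $\mu(A_k|S_k)$ by $\pi(A_k|S_k)$ while leaving $\nu$ and $P$ unchanged, yielding $\EE_{\mu}[\rho_{0:j-1}\mathds{1}[S_j=s]]=\PP_{\pi}(S_j=s)$ for every $s\in\cS$ (well defined under the coverage of Assumption~\ref{assump:stationary_abso_cont}, which prevents division by zero on realizable paths). Summing against $f$ gives $\EE_{\mu}[\rho_{0:j-1}f(S_j)]=\sum_{s}f(s)\PP_{\pi}(S_j=s)$, and reassembling the discounted series recovers $(1-\gamma)\sum_{j}\gamma^j\PP_{\pi}(S_j=s)=d_{\pi,\gamma}(s)$ via Equation~\ref{eq:def_stationary}. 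The final expression is therefore $\sum_{s\in\cS}f(s)d_{\pi,\gamma}(s)=\EE_{S\sim d_{\pi,\gamma}}[f(S)]$, as claimed. I expect the cancellation of the occurrence counts and of $n$ in the first step to be the conceptual crux that makes the identity \emph{exact} for finite data, while the rigorous justification of the sum–expectation interchange is the main technical obstacle.
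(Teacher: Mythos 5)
Your proposal is correct and follows essentially the same route as the paper's own proof: cancel the occurrence counts and the factor $n$ to reduce the inner expectation to $\frac{1-\gamma}{K}\sum_{i}\sum_{j}\gamma^{j}\rho^i_{0:j-1}f(S^i_j)$, then take the expectation over trajectories and use the importance-sampling identity $\EE_{\mu}[\rho_{0:j-1}\mathds{1}[S_j=s]]=\PP_{\pi}(S_j=s)$ to recover $d_{\pi,\gamma}$. You merely spell out two steps the paper leaves implicit (the dominated-convergence justification for swapping the infinite sum with the expectation, and the per-decision change of measure), which is a welcome addition rather than a deviation.
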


This theorem holds because our estimator equals the ratio of an unbiased and consistent estimation of the discounted target distribution, denoted as $\hat{d}_{\pi,\gamma}(s)=\frac{1}{K}\sum_{i=1}^K\sum_{j \ge 0} \gamma^j \rho_{0:j-1}^i \mathds{1}[S_j^i=s]$ to the sampling distribution from the dataset, denoted as $\hat{d}(s) = \cfrac{\sum_{t=0}^{n-1}  \mathds{1}[S_t=s ]}{n}$. Therefore, as long as we can calculate our estimator, the distribution shift can be solved.

\section{Average-DICE Algorithm}
In this section, we work on how to evaluate our derived estimator $c_{\cD}(s)$.
Our estimator averages the corresponding discounted IS-ratio products for a state $s$. 
However, in high-dimensional state spaces, direct averaging by state counting is infeasible.
Thus, we propose to approximate the expectation of discounted IS-ratio products via regression.

We first introduce our regression losses and propose the Average-DICE algorithm. Then, we show that with linear function approximation, incrementally updating our loss results in a convergent algorithm. The fixed point of this update corresponds to minimizing the mean squared error (MSE) to the true density ratio with regularization.

\subsection{Loss}
We learn our estimator as a ratio model by minimizing the least squares error.
Solving least squares regression with Markovian data is well studied but generally requires more samples compared to i.i.d. learning tasks \citep{nagaraj2020least}.
In our setup, the ratio model takes states as inputs and 
is trained by minimizing the mean squared error between its output  $f_{\theta}(s_t)$ and its corresponding regression target $\gamma^{\textrm{time}_t}  \rho_{\textrm{prod},t}$. 
In this case, the ratio is estimated by $\frac{n}{K}(1-\gamma) f_{\theta}(s)$.
The expected discounted cumulative return can be estimated by 
\begin{equation}
    \hat{J}(\pi) = \frac{1}{n} \sum_{t=0}^{n-1} \frac{n}{K}(1-\gamma) f_{\theta}(S_t) R_t.
\end{equation}

In the regression with a fixed dataset, a parameter regularization is usually added to avoid overfitting. Our algorithm also uses $\frac{\lambda_1 \lVert \theta \rVert_2^2}{2}$ to regularize, where $\lambda_1$ is the regularization parameter.

Meanwhile, same as GradientDice, the learnt ratio should ensure that $\sum_s d_{\mu}(s) \frac{n}{K}(1-\gamma) f_{\theta}(s) \approx \sum_{s}d_{\mu}(s)\frac{d_{\pi,\gamma(s)}}{d_\mu(s)} =1$. So our algorithm further regularizes by the loss $\frac{\lambda_2}{2} (\sum_s d_{\mu}(s) \frac{n}{K} (1-\gamma) f_{\theta}(s) -1)^2$, called the \emph{distribution regularization}.
An expectation in a square loss cannot be estimated unbiasedly using samples. Thus, this regularization term is re-written by the Fenchel conjugate as
\begin{equation}
    \lambda_2 (\max_{\eta \in \RR} \EE_{s \sim d_{\mu}}[\eta  \frac{n}{K}(1-\gamma) f_{\theta}(s) - \eta] -\frac{\eta^2}{2}).
\end{equation}

The loss given a dataset $\cD$ is written as 
\begin{align}
    \min_{\theta}\mathcal{L}(\theta; \cD) := \EE_{S_t \sim \cD}&\left [ \frac{1}{2}(f_{\theta}(S_t) - \gamma^\textrm{time}_t\rho_{\textrm{prod},t} )^2 \right ] + \frac{\lambda_1 \lVert \theta \rVert_2^2}{2} \nonumber\\
    &+ \lambda_2 \left (\max_{\eta \in \RR} \EE_{S_t \sim \cD}[\eta  \frac{n}{K}(1-\gamma) f_{\theta}(S_t) - \eta] -\frac{\eta^2}{2} \right ). 
    \label{eq:loss}
\end{align}

\subsection{Convergence Analysis}
This section focuses on the linear function approximation with $f_{\theta}(s) = \phi(s)^\top\theta$, where $\phi(s)\in\RR^d$ is a given state feature and $\theta\in\RR^d$ is the parameter. We denote $\Phi\in\RR^{|\cS|\times d}$ as the feature matrix, where each row corresponds to the feature vector of a particular state $s$.

At each step $t$, the agent takes an action according to the behaviour policy at state $s_t$. If the trajectory terminates, the agent restarts according to the initial distribution. The algorithm updates the parameters $\theta$ and $\eta$ in our distribution regularization at each step with the newly collected transition following our loss shown in Equation \ref{eq:loss}. The regression target, denoted by $y_t = \gamma^\textrm{time}_t\rho_{\textrm{prod},t}$, equals to the discounted IS-ratio products computing using the state's current trajectory, where $\textrm{time}_t$ represents the step of the state in its current trajectory and $\rho_{\textrm{prod},t} = \rho_{0:\textrm{time}_t-1}$. 

Instead of using a running scalar of $\frac{t}{K}$ in the loss, we evaluate an average trajectory length $H$ at the beginning and keep it fixed. This fixed multiplier simplifies the proof. We hypothesize that using the original one $\frac{t}{K}$ converges as well but with high probability instead of almost surely, since $\frac{t}{K}$ may not be bounded for all $t \in \mathbb{N}$. However, both two scalars are estimating the average trajectory length $\EE_{\mu}[T]$ and are close.
\setlength{\abovedisplayskip}{3.5pt}
\setlength{\belowdisplayskip}{3.5pt}

The update rule is
\begin{align}
    \eta_{t+1} &= \eta_t + \alpha_t \lambda_2 (H(1-\gamma)\phi(s_t)^\top\theta_t - 1- \eta_t).\\
    \theta_{t+1} &= \theta_t - \alpha_t (\phi(s_t) (\phi(s_t)^\top\theta_t - y_t)+\lambda_2 \eta_t H(1-\gamma) \phi(s_t) + \lambda_1 \theta_t),
\end{align}
where $\alpha_t$ is the learning rate. We combine the system of equations into $$d_{t+1} = d_t + \alpha_t (G_{t+1} d_t + g_{t+1}),$$ 
where $d_{t+1} = 
\begin{bmatrix}
    \theta_{t+1} \\
    \eta_{t+1}
\end{bmatrix}$ denotes the concatenation of parameters, 
 and
update matrices are $$G_{t+1} = 
\begin{bmatrix}
    -\phi(s_t) \phi(s_t)^\top - \lambda_1 I & -\lambda_2 H (1-\gamma) \phi(s_t)\\
    \lambda_2 H(1-\gamma) \phi(s_t) & -\lambda_2
\end{bmatrix} \textrm{and } \;
g_{t+1} = 
\begin{bmatrix}
    \phi(s_t) y_t \\
    -\lambda_2
\end{bmatrix}.$$

\begin{assumption}
\begin{enumerate}
    \item $\Phi$ has linearly independent columns.
    \item Each feature vector $\phi(s)$ has its L$2$-norm bounded by $L$.
    \item The behaviour policy $\mu$ induces an irreducible Markov chain on $\cS$ and moreover, for all $(s,a) \in \cS \times \cA$, $\mu(a|s)>0$ if $\pi(a|s)>0$.
    \item The stepsize sequence $\{\alpha_t\}$ is deterministic and eventaully nonincreasing, and satisfies $\alpha_t \in (0,1]$, $\sum_t \alpha_t = \infty$, and $\sum_t \alpha_t^2 < \infty$.
\end{enumerate}

\label{assump:convergence}
\end{assumption}

These four assumptions are also assumed for ETD convergence analysis and are common for analyzing the asymptotic behaviours of linear update rules.

Define two matrices  $$G = 
\begin{bmatrix}
    -\Phi^\top D_{\mu} \Phi - \lambda_1 I  & -\lambda_2 H (1-\gamma)\Phi^\top d_{\mu}\\
    \lambda_2 H (1-\gamma) d_{\mu}^\top \Phi & -\lambda_2
\end{bmatrix}, \textrm{ and }  g = 
\begin{bmatrix}
    \frac{1}{(1-\gamma)\EE_{\mu}[T]} \Phi^\top D_{\mu} y\\
    -\lambda_2
\end{bmatrix},$$\\
where $y \in \RR^\cS$ denotes the density ratio $\frac{d_{\pi,\gamma}(s)}{d_{\mu}(s)}$.


Incremental updates under our losses give convergence with linear function approximation. The proof follows the convergence analysis of ETD and is presented in Appendix B. Intuitively, our correction gives a consistent estimator with variance controlled by the discount factor, and thus, the convergence follows.

\begin{theorem}
Based on Assumption \ref{assump:bound_traj_len} and \ref{assump:convergence}, we have
    \begin{equation}
        d_t \to -G^{-1} g \text{ a.s.}
    \end{equation}
which gives the same fixed point for minimizing the mean square error to the true density ratio, which is $\EE_{S_t \sim \cD} \left [ \frac{1}{2} \left (f_{\theta_{\textrm{mse}}}(S_t) - \frac{1}{(1-\gamma)\EE_{\mu}[T]} \frac{d_{\pi,\gamma}(S_t)}{d_{\mu}(S_t)} \right )^2 \right ]$ with the same regularizations.
\end{theorem}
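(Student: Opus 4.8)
The plan is to treat the coupled recursion $d_{t+1} = d_t + \alpha_t(G_{t+1}d_t + g_{t+1})$ as a linear stochastic approximation driven by Markov noise and to follow the convergence machinery developed for emphatic TD in \citep{yu2015convergence}. This needs three ingredients: pinning down the limiting mean update $(G,g)$, showing that $G$ is \emph{Hurwitz} (negative stable) so that the mean ODE $\dot d = Gd + g$ has the globally asymptotically stable equilibrium $-G^{-1}g$, and verifying the moment and mixing conditions under which the iterates track that ODE almost surely.

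First I would compute the stationary expectations of the per-step quantities. Under the irreducible behaviour chain the state at step $t$ converges in distribution to $d_{\mu}$ and $H$ is fixed at an estimate of $\EE_{\mu}[T]$, so the only delicate term is $\EE_{\mu}[\phi(S_t)\,y_t]$ with $y_t = \gamma^{\textrm{time}_t}\rho_{\textrm{prod},t}$. Conditioning on $S_t = s$ and using the same averaging identity that underlies Proposition \ref{thm:density-ratio} gives $\EE_{\mu}[y_t \mid S_t = s] = \frac{1}{(1-\gamma)\EE_{\mu}[T]}\frac{d_{\pi,\gamma}(s)}{d_{\mu}(s)} =: \bar{y}(s)$; weighting by $d_{\mu}(s)\phi(s)$ and summing recovers the top block of $g$, while the remaining blocks of $G$ and $g$ follow by direct expectation. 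The payoff of this step is the observation that the regression target's conditional mean is exactly the scaled density ratio $\bar{y}$, which is what later links the fixed point to the mean-squared-error solution.

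Next I would show $G$ is negative stable. Its off-diagonal blocks $-\lambda_2 H(1-\gamma)\Phi^\top d_{\mu}$ and $\lambda_2 H(1-\gamma) d_{\mu}^\top\Phi$ are negatives of each other's transpose, so they cancel in the symmetrization and
\[
\tfrac12\bigl(G + G^\top\bigr) = \begin{bmatrix} -\Phi^\top D_{\mu}\Phi - \lambda_1 I & 0 \\ 0 & -\lambda_2 \end{bmatrix}.
\]
Under irreducibility $D_{\mu}$ is strictly positive, so linear independence of the columns of $\Phi$ (Assumption \ref{assump:convergence}) makes $\Phi^\top D_{\mu}\Phi$ positive definite; together with $\lambda_1 \ge 0$ and $\lambda_2 > 0$ the symmetric part is negative definite. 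For any eigenpair $(\lambda,x)$ of the real matrix $G$ we then have $\mathrm{Re}(\lambda)\lVert x\rVert^2 = x^* \tfrac12(G+G^\top)x < 0$, so every eigenvalue has negative real part; in particular $G$ is invertible and $-G^{-1}g$ is the unique, globally attracting equilibrium of the ODE.

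The main obstacle is the stochastic-approximation step, because the target $y_t = \gamma^{\textrm{time}_t}\rho_{\textrm{prod},t}$ carries an IS-ratio product that can grow exponentially within an episode --- precisely the unboundedness that complicates the ETD analysis. The key estimate is that discounting dominates this growth: summing $\gamma^{2j}\EE_{\mu}[\rho_{0:j-1}^2]$ over $j$ with $\gamma < 1$, the finite-expected-length Assumption \ref{assump:bound_traj_len}, and the coverage condition of Assumption \ref{assump:convergence} yields $\EE_{\mu}[y_t^2] < \infty$. With this moment bound, feature boundedness and ergodic mixing of the behaviour chain, and the stepsize conditions of Assumption \ref{assump:convergence}, the hypotheses of the Markov-noise linear stochastic approximation result of \citep{yu2015convergence} hold, giving $d_t \to -G^{-1}g$ almost surely. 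Finally, to identify the fixed point with the regularized-MSE minimizer, I would write the first-order saddle conditions of $\EE_{S_t \sim \cD}[\tfrac12(f_{\theta}(S_t) - \bar{y}(S_t))^2]$ augmented by the $\lambda_1$ parameter penalty and the $\lambda_2$ distribution regularizer; these reduce to the linear system $Gd + g = 0$, and since $y_t$ has conditional mean $\bar{y}$ its least-squares normal equations coincide with those for the deterministic target $\bar{y}$. Invertibility of $G$ then forces the common unique solution $-G^{-1}g$, completing the identification.
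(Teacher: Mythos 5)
Your overall architecture matches the paper's: identify the limiting pair $(G,g)$ by ergodic averaging, show $G$ is Hurwitz through the cancellation of the skew-symmetric off-diagonal blocks (your symmetrization identity $\tfrac12(G+G^\top)$ being block diagonal and negative definite is the same calculation the paper performs directly on $x^*Gx$ with an eigenvector $x=(x_1,x_2)$), invoke the ETD-style Markov-noise stochastic-approximation machinery, and identify $-G^{-1}g$ with the regularized least-squares solution because the conditional mean of the regression target is the scaled density ratio $\bar{y}(s)=\frac{1}{(1-\gamma)\EE_{\mu}[T]}\frac{d_{\pi,\gamma}(s)}{d_{\mu}(s)}$. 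Those parts are correct and essentially the paper's route.

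The genuine gap is your key estimate $\EE_{\mu}[y_t^2]<\infty$. The bound you propose --- summing $\gamma^{2j}\EE_{\mu}[\rho_{0:j-1}^2]$ over $j$ --- does not follow from Assumptions \ref{assump:bound_traj_len} and \ref{assump:convergence}: on a finite MDP the coverage condition only yields $\rho(a|s)\le\rho_{\max}$, and $\EE_{\mu}[\rho_{0:j-1}^2]$ can grow geometrically at a rate exceeding $\gamma^{-2}$ (this happens whenever $\gamma^2\,\EE_{\mu}[\rho(A|S)^2\,|\,S=s]>1$ along some recurrent states), so the series can diverge and the second moment of the discounted IS product is not uniformly bounded. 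This is precisely the known pathology of emphatic-TD traces, and it is why the paper does not argue through variances: following Yu, it establishes only $\sup_{t\ge 0}\EE[\lVert(y_t,\phi(s_t)y_t)\rVert]<\infty$ --- a first-moment bound, which does hold since $\EE_{\mu}[\gamma^{j}\rho_{0:j-1}\mathds{1}[S_j=s]]=\gamma^{j}\PP_{\pi}(S_j=s)$ --- together with forgetting of the initial trace condition and the weak Feller / bounded-in-probability properties of the augmented chain $Z_t=(S_t,A_t,y_t,\phi(s_t)y_t)$, and then applies Kushner--Yin Theorem 6.1.1, whose averaging hypotheses are of $L^1$ type. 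In short, you invoke the right theorem but justify its hypotheses with a lemma that is false in general; replacing the second-moment bound with the first-moment and ergodicity properties of the trace process (as in the paper's properties 1--3) repairs the argument, and the remainder of your proof, including the fixed-point identification via the normal equations for the conditional-mean target $\bar{y}$, goes through.
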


\begin{figure}[t]
    \centering
    \vspace*{-6mm}
    \hspace*{-2mm}
    \includegraphics[width=1.1\textwidth
    ]{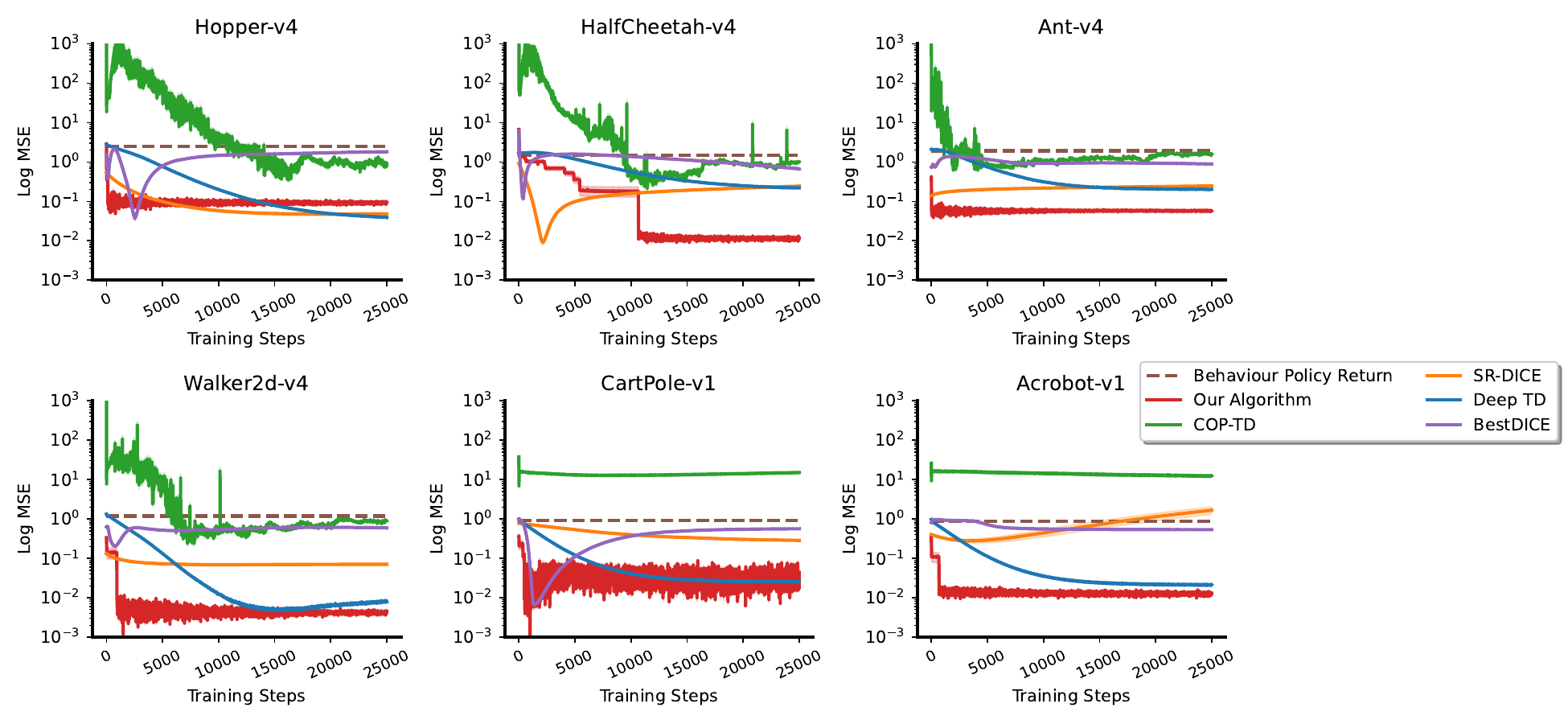}
    \vspace*{-2mm}
    \caption{This figure presents the mean square error of estimating the objective $J(\pi)$ in the log scale for each task. Our method, as the red line, shows dominant behaviour on most tasks and comparable behaviour on Hopper and CartPole. }
    \label{fig:tune}
    \vspace*{-6mm}
\end{figure}

\section{Experiments}
We perform OPE on classic control and MuJoCo \citep{todorov2012mujoco} tasks to evaluate our method and compare it with other distribution correctors, including COP-TD, BestDice, and SR-DICE. Additionally, we include two simple baselines: the average reward, which represents the objective under the behaviour policy, and off-policy TD.
In these OPE tasks, the target policy is trained using PPO \citep{schulman2017proximal}, which can achieve high-performing policies; for example, the agent for CartPole receives above $410$ return, close to the optimal return of $500$. For discrete actions, the behaviour policy is a combination of the target policy and the uniform random policy. For continuous actions, the behaviour policy is obtained by increasing the variance of the Gaussian target policy.

The hyperparameters are tuned using a dataset with $4000$ transitions coming from trajectories each of length $100$. The discount factor is fixed at $0.95$, which is a common choice. The random policy weights $0.3$ in the behaviour policy for discrete-action tasks, and the variance is doubled for continuous-action tasks. We selected the combination of hyperparameters that yields the lowest objective estimation error, averaged across all tasks. The results are averaged among $10$ seeds, and the variance is tiny due to similar rewards received per step for each run.

Our results in Figure \ref{fig:tune} show that most of the existing methods underperform compared to off-policy TD, confirming prior work \citep{fujimoto2021deep}.
Only SR-DICE, in the orange line, can give comparable behaviour. Our method, as the red line, shows dominant behaviour on most tasks and comparable behaviour on Hopper and CartPole. More surprisingly, it gives the fastest convergence to a stable low error. Also, in Figure \ref{fig:tune}, our method is tuned specifically to the trajectory length, the randomness of the behaviour policy, the size of the dataset, and the discount factor, which gives a small advantage to our algorithm. Thus, in the next step, we test out the robustness against more settings, as illustrated in Figure \ref{fig:setting_walker}. This figure presents the results of the Walker task, while results for other tasks are provided in Appendix C.

The top-left subfigure of Figure \ref{fig:setting_walker} examines robustness against different discount factors. Our algorithm proves less robust to changes in the discount factor and loses its leading performance, yielding worse results than TD and SR-DICE. Because altering the discount factor significantly changes the regression targets for the entire dataset, it is reasonable that our method would require re-tuning for each discount factor to achieve optimal performance.

When the discount factor is fixed at $0.95$, our method generally maintains a dominant or at least comparable performance relative to other baselines, except in Acrobot and Hopper. In Acrobot, it still outperforms other density-ratio estimators in most settings and is on par with TD; in Hopper, both our method and SR-DICE perform similarly to TD. Also, an ablation study without the distribution regularization is given in Appendix C.

\begin{figure}[ht]
    \centering
    \vspace*{-4mm}
    \hspace*{-2mm}
    \includegraphics[width=0.75\textwidth
    ]{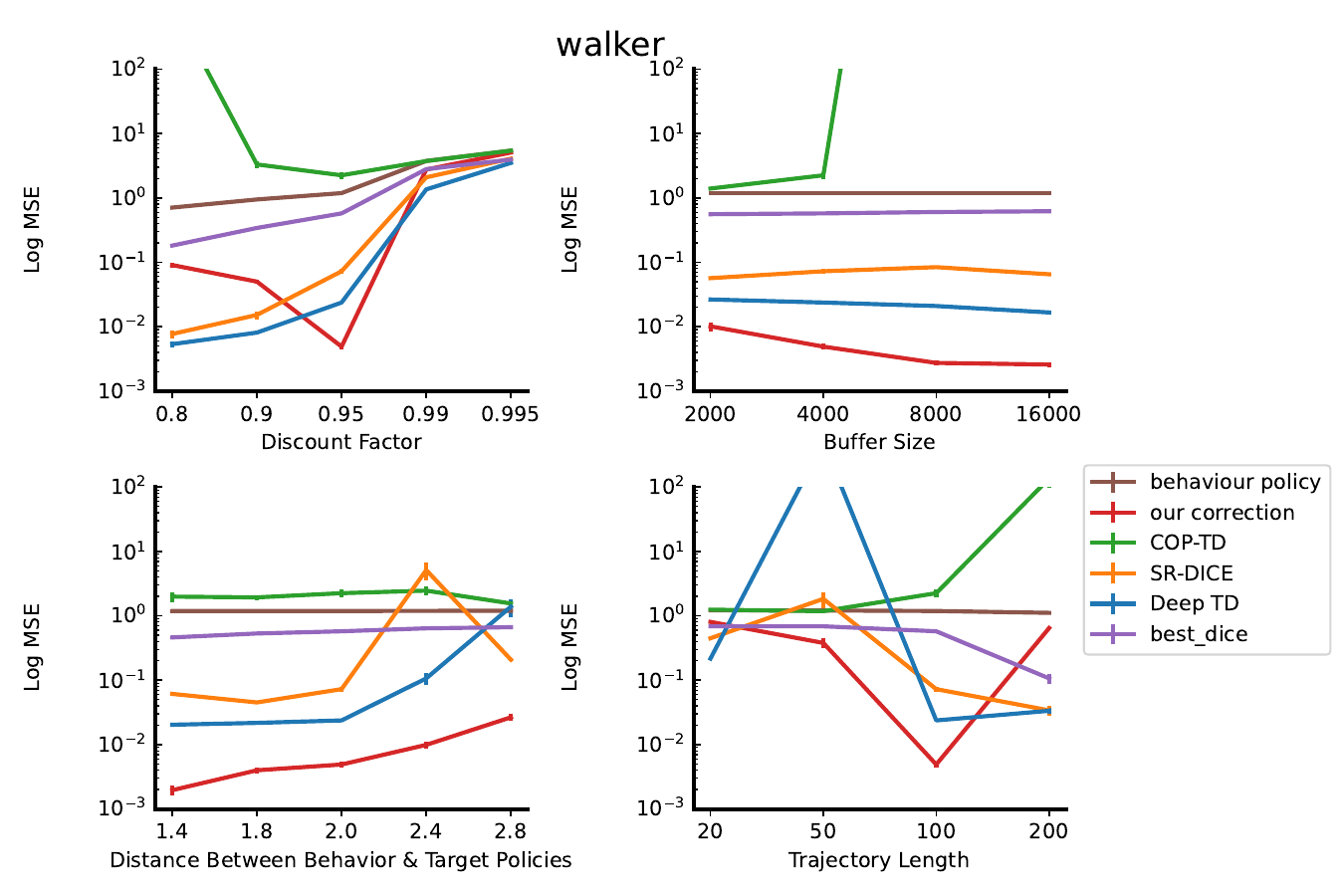}
    \vspace*{-2mm}
    \caption{We evaluate the robustness of our method under varying dataset sizes, trajectory lengths, behaviour policies, and discount factors. This figure presents the results of the Walker task. Our algorithm is less robust only to changes in the discount factor and excessively long trajectory length.}
    \label{fig:setting_walker}
    \vspace*{-6mm}
\end{figure}

\section{Conclusion}
We introduced Average-DICE, a novel regression-based estimator for the stationary state density ratio. Our key contributions include deriving an alternative form of the state density ratio, proposing a novel distribution corrector and designing the learning algorithm for our distribution corrector. Furthermore, we showed that incremental updates converge under linear function approximation, demonstrating that the resulting fixed point coincides with the minimum MSE solution to the true ratio up to regularization. Empirical results on discrete and continuous tasks confirmed that Average-DICE provides stable and accurate off-policy evaluation. Looking forward, integrating this density ratio correction into policy gradient algorithms could address distribution mismatches more effectively than current conservative policy updates. 
\bibliography{main}
\bibliographystyle{rlj}

\beginSupplementaryMaterials

\section*{Appendix A: Derivation of Our Distribution Corrector}
\label{sec:appendixA}

\subsection*{Proof of the Consistency Theorem}
We fist present a theorem used for the proof.

\begin{theorem}[Theorem 1.0.2 Ergodic theorem, Norris (1998)]
Let $\mathcal{M}$ be an irreducible and positive recurrent Markov decision process for all policies. Then, for each state $s$,
\begin{align*}
    \mathbb{P}\left(\frac{\sum_{t=0}^{T-1} \mathds{1}[S_t=s]}{T} \to \frac{1}{\mathbb{E}[\tau_s^+(s)]} \text{ as } T \to \infty\right) =1.
\end{align*}
\end{theorem}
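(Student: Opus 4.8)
The statement is the classical ergodic (law-of-large-numbers) theorem for an irreducible, positive recurrent Markov chain, so the plan is to prove it by \emph{renewal theory} applied to the successive return times to the fixed state $s$. First I would fix $s$ and introduce the visit count $V_T := \sum_{t=0}^{T-1}\mathds{1}[S_t=s]$ together with the sequence of visit times: set $T_1 := \inf\{t \ge 0 : S_t = s\}$ and, for $k \ge 2$, $T_k := \inf\{t > T_{k-1} : S_t = s\}$. Irreducibility together with recurrence guarantees that each $T_k$ is finite almost surely and that $V_T \to \infty$ as $T \to \infty$. The central structural fact is that each $T_k$ is a stopping time, so the \emph{strong Markov property} makes the inter-visit increments $L_k := T_k - T_{k-1}$, for $k \ge 2$, independent and identically distributed, each with the law of the recurrence time $\tau_s^+(s)$; positive recurrence supplies the finite mean $\mu_s := \mathbb{E}[\tau_s^+(s)] < \infty$.

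Second, I would apply the strong law of large numbers to these i.i.d. increments. Writing $T_k = T_1 + \sum_{j=2}^{k} L_j$ and noting that $T_1/k \to 0$ almost surely (since $T_1$ is an almost surely finite random variable), the SLLN yields $T_k/k \to \mu_s$ almost surely. To transfer this limit to $V_T/T$, I would use the renewal sandwiching inequality: by definition of $V_T$, the $V_T$-th visit occurs at or before time $T-1$ while the next visit occurs strictly after, i.e. $T_{V_T} \le T-1 < T_{V_T+1}$. Dividing through by $V_T$ gives
\[
\frac{T_{V_T}}{V_T} \;\le\; \frac{T-1}{V_T} \;<\; \frac{T_{V_T+1}}{V_T+1}\cdot\frac{V_T+1}{V_T}.
\]
Since $V_T \to \infty$ almost surely, both bounding sequences converge to $\mu_s$ (the outer factor $\tfrac{V_T+1}{V_T}\to 1$), so the squeeze theorem forces $(T-1)/V_T \to \mu_s$, and hence $V_T/T \to 1/\mu_s = 1/\mathbb{E}[\tau_s^+(s)]$ almost surely, which is exactly the claim.

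The main obstacle is not any single calculation but the careful probabilistic bookkeeping that makes the renewal decomposition legitimate. Specifically, I must verify that the $T_k$ are genuine stopping times and invoke the strong Markov property at each of them to obtain the i.i.d. excursion structure; this is the step where the \emph{initial transient} $T_1$ (the first passage to $s$, whose law depends on the starting state and is \emph{not} distributed as $\tau_s^+(s)$) must be isolated and shown to be asymptotically negligible, and where the \emph{incomplete final excursion} between time $T_{V_T}$ and $T-1$ is absorbed by the sandwich rather than estimated directly. Since the paper's setting is an MDP under a fixed policy, I would state the argument for the induced Markov chain $\langle \mathcal{S}\cup\{\varsigma\}, P_\pi\rangle$ and observe that the hypotheses — irreducibility and positive recurrence of every state under every policy — provide exactly the finiteness of $\mu_s$ and the recurrence needed throughout.
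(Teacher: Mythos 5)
Your proof is correct, and in fact the paper does not prove this statement at all --- it imports it verbatim as a known result (Norris 1998, the ergodic theorem for Markov chains), whose canonical proof is exactly your renewal-theoretic argument: the strong Markov property yielding i.i.d.\ excursion lengths $L_k$ for $k \ge 2$, the SLLN applied to the return times with the initial passage $T_1$ isolated as asymptotically negligible, and the sandwich $T_{V_T} \le T-1 < T_{V_T+1}$ inverted to get $V_T/T \to 1/\mathbb{E}[\tau_s^+(s)]$. You also correctly use positive recurrence precisely where it is needed (finiteness of the mean for the SLLN) and sensibly restate the hypothesis for the induced chain $\langle \mathcal{S}\cup\{\varsigma\}, P_\pi\rangle$ under a fixed policy, which cleans up the paper's slightly loose phrasing of an ``irreducible and positive recurrent Markov decision process for all policies.''
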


Next, we present our main theorem and show that our correction term equals the distribution ratio, $\frac{d_{\pi,\gamma}(s)}{d_{\mu}(s)}$ and thus, this term successfully corrects the state distribution shift.

\begin{theorem}[Consistency]
Given
\begin{itemize}
    \item a finite Markov decision process,
    \item a dataset $\cD$ collected under a behaviour policy $\mu$, and
    \item a target policy $\pi$
\end{itemize} 
such that Assumption \ref{assump:stationary_abso_cont} is satisfied, 
then for state $s$ with $$d_{\pi,\gamma}(s) >0,$$
we have the density ratio equal
\begin{equation}
\frac{d_{\pi,\gamma}(s)}{ d_{\mu}(s)} = \lim_{n \to \infty}  \frac{n}{K}(1-\gamma)\mathbb{E}_{t \sim I_s} [\gamma^{\textrm{time}_t}  \rho_{\textrm{prod},t}] . \label{eq:density-ratio}
\end{equation}
\end{theorem}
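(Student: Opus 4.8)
The plan is to split the estimator into a ratio of two empirical averages and identify each limit separately. Writing out the conditional expectation, the right-hand side becomes
$$\frac{n}{K}(1-\gamma)\,\frac{\sum_{t=0}^{n-1}\gamma^{\textrm{time}_t}\rho_{\textrm{prod},t}\mathds{1}[S_t=s]}{\sum_{t=0}^{n-1}\mathds{1}[S_t=s]} = (1-\gamma)\,\frac{A_n}{B_n},$$
where $A_n := \tfrac{1}{K}\sum_{t=0}^{n-1}\gamma^{\textrm{time}_t}\rho_{\textrm{prod},t}\mathds{1}[S_t=s]$ and $B_n := \tfrac{1}{n}\sum_{t=0}^{n-1}\mathds{1}[S_t=s]$. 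The key observation is that the awkward factor $n/K$ cancels exactly: the numerator sum equals $K A_n$ and the denominator sum equals $n B_n$, so $\tfrac{n}{K}\cdot\tfrac{KA_n}{nB_n}=\tfrac{A_n}{B_n}$. Intuitively, $n/K$ regroups the per-occurrence average into a per-trajectory numerator over a per-step denominator. The whole problem thus reduces to finding the almost-sure limits of $A_n$ and $B_n$.

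For the denominator, $B_n$ is the fraction of steps spent in state $s$ along the (restarted) behaviour chain, so the ergodic theorem stated above, together with the form $d_\mu(s) = 1/\EE_\mu[\tau_s^+(s)]$ from Lemma \ref{lemma:form_stat_dist}, gives $B_n \to d_\mu(s)$ almost surely. This limit is positive: since $d_{\pi,\gamma}(s)>0$, Assumption \ref{assump:stationary_abso_cont} ensures $d_\mu(s)>0$, so the ratio is well defined in the limit.

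For the numerator I would reindex the transitions by their trajectory, writing $A_n = \tfrac{1}{K}\sum_{i=1}^K X_i$ plus a single incomplete-trajectory remainder that vanishes after dividing by $K\to\infty$, where $X_i := \sum_{j\ge 0}\gamma^j\rho_{0:j-1}^i\mathds{1}[S_j^i=s]$ are i.i.d.\ across the complete episodes generated under $\mu$ (the restarts make consecutive episodes independent, each started from $\nu$). The core computation is the importance-sampling change of measure: for each fixed $j$, summing over length-$j$ paths ending at $s$ and using that $\rho_{0:j-1}$ converts the product of behaviour action-probabilities into target action-probabilities while the (policy-independent) transition and termination dynamics are untouched yields $\EE_\mu[\rho_{0:j-1}\mathds{1}[S_j=s]] = \PP_\pi(S_j=s)$. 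Because every term is non-negative I can swap expectation and sum by Tonelli and conclude $\EE_\mu[X_1] = \sum_{j\ge 0}\gamma^j\PP_\pi(S_j=s) = d_{\pi,\gamma}(s)/(1-\gamma)$ via Equation \ref{eq:def_stationary}; in particular this mean is finite, bounded by $1/(1-\gamma)$, so the strong law of large numbers applies and $A_n \to d_{\pi,\gamma}(s)/(1-\gamma)$ almost surely. Combining the two limits, $(1-\gamma)A_n/B_n \to d_{\pi,\gamma}(s)/d_\mu(s)$, which is the claim.

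I expect the main obstacle to be bookkeeping rather than a deep difficulty. I must carefully justify (i) the change-of-measure identity in the episodic setting, making sure the termination state contributes nothing—indicators of a real state $s$ force $j<T$, and the $\varsigma$-dynamics are shared by $\mu$ and $\pi$ since only action probabilities differ; and (ii) that the random episode count $K=K(n)$ grows to infinity while the leftover incomplete trajectory and the coupling between $K$ and $n$ do not spoil the law of large numbers. Here Assumption \ref{assump:bound_traj_len} ($\EE_\mu[T]<\infty$) is what guarantees $K\to\infty$ and keeps the remainder negligible. The fact that individual IS products $\rho_{0:j-1}$ can be large is harmless, because only finiteness of the first moment $\EE_\mu[X_1]$ is required, which the discounting secures.
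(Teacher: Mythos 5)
Your proposal is correct and follows essentially the same route as the paper's proof: the identical decomposition into a per-trajectory numerator ($(1-\gamma)A_n$, the paper's $f_n$) and a per-step occupancy denominator ($B_n = 1/g_n$), with the ergodic theorem plus Lemma \ref{lemma:form_stat_dist} handling the denominator and a law of large numbers over i.i.d.\ episodes handling the numerator. Your version is in fact slightly more careful than the paper's on two points—you correctly invoke the strong law of large numbers where the paper misattributes the step to the central limit theorem, and you spell out the change-of-measure identity $\EE_\mu[\rho_{0:j-1}\mathds{1}[S_j=s]]=\PP_\pi(S_j=s)$ and the negligibility of the incomplete trailing trajectory, which the paper leaves implicit.
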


\begin{proof}

\textbf{Reformulate the RHS.}

When sampling $t \sim I_s$ uniformly, the probability equals $\cfrac{\mathds{1}[S_t=s]}{\sum_{k=1}^n \mathds{1}[S_k=s]}$. Furthermore,
\begin{align}
\frac{n}{K}(1-\gamma)\mathbb{E}_{t \sim I_s} [\gamma^{\textrm{time}_t}  \rho_{\textrm{prod},t}]  &=  \frac{n}{K}(1-\gamma) \sum_{t=1}^n\cfrac{\mathds{1}[S_t=s] \gamma^{\textrm{time}_t}  \rho_{\textrm{prod},t} }{\sum_{k=1}^n \mathds{1}[S_k=s]}  \\
&=  (1-\gamma) \cfrac{n}{\sum_{t=0}^{n-1} \mathds{1}[S_t=s]} \frac{1}{K} \sum_{i=1}^K\sum_{j\ge0}\mathds{1}[S^i_j=s] \gamma^{j}  \rho^i_{0:j-1}  .
\end{align}
Note that $n$ is the number of transitions, and $K$ is the number of trajectories.

Define two functions:
\begin{align}
    g_n(s) &= \cfrac{n}{\sum_{t=0}^{n-1} \mathds{1}[S_t=s]}.\\
    f_n(s) &= (1-\gamma)\frac{1}{K}\sum_{i=1}^K\sum_{j\ge 0}\mathds{1}[S_j=s] \gamma^{j}  \rho_{0:j-1} .
\end{align}

Note that
\begin{equation}
    \frac{n}{K}(1-\gamma)\mathbb{E}_{t \sim I_s} [\gamma^{\textrm{time}_t}  \rho_{\textrm{prod},t}]  = g_n(s)f_n(s).
    \label{eq:reformulate_RHS}
\end{equation}

\textbf{Prove the irreducibility.}
When studying states with non-negative discounted stationary distribution values under the target policy $\pi$, they can form an irreducible set with restarts.

$d_{\pi,\gamma}(s)>0$ implies that $d_{\mu}(s)>0$. Thus, $$\PP_{\mu}(S_j=s\text{, for some }j>0 \text{ and } j < T) > 0.$$
Meanwhile, we have $\PP_{\mu}(S_j=\varsigma\text{, for some }j>0|S_0=s)>0$ for episodic tasks. 
Thus, with restarts, given any two states $s$ and $s'$ with positive stationary distribution values,
\begin{align*}
    &\PP_{\mu}(S_j=s'|S_0=s)  \\
    & > \PP_{\mu}(S_j=\varsigma\text{, for some }j>0|S_0=s) \PP_{\mu}(S_j=s'\text{, for some }j>0 \text{ and } j < T) \\
    &>0.
\end{align*}

\textbf{Prove the positive recurrence.}
Note that a finite and irreducible Markov chain is positive recurrent.

\textbf{Prove the infinite number of trajectories.}
By Assumption \ref{assump:bound_traj_len}, the termination state is positive recurrent and is visited infinitely many times as the step $n$ goes to zero. Thus, there are infinitely many trajectories.

\textbf{Compute the almost sure limit of two functions.}
The function $g_n(s)$ is proven to converge to $g(s) = \EE_{\mu}[\tau_s^+(s)]$ by the ergodic theorem.

Apparently, $\lim_{n \to \infty} f_n(s) = \lim_{K \to \infty}(1-\gamma)\frac{1}{K}\sum_{i=1}^K\sum_{j\ge0}\mathds{1}[S_j=s] \gamma^{j}  \rho_{0:j-1} $.
By the central limit theorem, we have
\begin{equation}
    \lim_{n \to \infty} f_n(s) = \EE_{\mu} \left [(1-\gamma)\sum_{j\ge0}\mathds{1}[S_j=s] \gamma^{j}  \rho_{0:j-1} \right ] = d_{\pi,\gamma}(s).
\end{equation}

Thus,
\begin{align}
    L.H.S &= \lim_{n\to \infty} g_n(s)f_n(s)\\
    &= g(s) f(s) \\
    & = \EE_{\mu}[\tau_s^+(s)] d_{\pi,\gamma}(s)\\
    &= \cfrac{d_{\pi,\gamma}(s)}{d_{\mu}(s)}.
\end{align}
The last line follows Lemma \ref{lemma:form_stat_dist}, and the proof is completed.
\end{proof}

\subsection*{Proof of the Unbiasedness}

\begin{theorem}[Unbiasedness]
Given
\begin{itemize}
    \item a finite Markov decision process,
    \item a dataset $\cD$ collected under a behaviour policy $\mu$, and
    \item a target policy $\pi$
\end{itemize} 
such that Assumption \ref{assump:stationary_abso_cont} is satisfied, 
reweighting data by our average correction gives unbiased estimation for any function $f:\cS \to \RR$, that is,
\begin{equation}
\EE_{\cD} \left [\EE_{S \sim \cD}\left [c_{\cD}(S) f(S)\right ] \right ]  = \EE_{S \sim d_{\pi,\gamma}}[f(S)],
\end{equation}
where $\EE_{\cD}$ means expectation over trajectories sampled under the behaviour policy, and
$\EE_{S \sim \cD}$ representing sampling states uniformly from the dataset.
\end{theorem}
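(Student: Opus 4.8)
The plan is to establish the identity pathwise first, for every fixed realization of $\cD$, and only then take the outer expectation. Writing $\EE_{S\sim\cD}$ as a uniform average over the $n$ transitions and grouping the terms by the state visited,
\begin{equation*}
\EE_{S\sim\cD}[c_{\cD}(S)f(S)] = \frac{1}{n}\sum_{t=0}^{n-1} c_{\cD}(S_t)f(S_t) = \sum_{s} \hat{d}(s)\, c_{\cD}(s)\, f(s),
\end{equation*}
where $\hat d(s)=\frac1n\sum_t \mathds{1}[S_t=s]$ and the sum effectively runs over states that actually appear. I would then reuse the reformulation already established in the consistency proof, $c_{\cD}(s)=g_n(s)f_n(s)$ with $g_n(s)=1/\hat d(s)$ and $f_n(s)=(1-\gamma)\hat d_{\pi,\gamma}(s)$. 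The empirical counts cancel exactly, $\hat d(s)\,c_{\cD}(s)=(1-\gamma)\hat d_{\pi,\gamma}(s)$, giving the pathwise identity $\EE_{S\sim\cD}[c_{\cD}(S)f(S)]=(1-\gamma)\sum_s \hat d_{\pi,\gamma}(s)f(s)$ for every dataset. The transition count $n$ has disappeared entirely, which is precisely what makes the subsequent expectation clean, as $\hat d_{\pi,\gamma}$ depends only on the $K$ (fixed number of) trajectories.

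Next I would apply $\EE_{\cD}$ to this pathwise identity. Since the state space is finite I can exchange the finite sum over $s$ with the expectation, reducing the goal to $(1-\gamma)\sum_s \EE_{\cD}[\hat d_{\pi,\gamma}(s)]\,f(s)$. Because $\hat d_{\pi,\gamma}(s)=\frac1K\sum_{i=1}^K\sum_{j\ge0}\gamma^j\rho^i_{0:j-1}\mathds{1}[S^i_j=s]$ is an average over $K$ i.i.d. trajectories, its expectation equals the single-trajectory expectation $\EE_{\mu}\big[\sum_{j\ge0}\gamma^j\rho_{0:j-1}\mathds{1}[S_j=s]\big]$. The core computation is then the per-step importance-sampling change of measure, $\EE_{\mu}[\rho_{0:j-1}\mathds{1}[S_j=s]]=\PP_{\pi}(S_j=s)$, which holds because the product $\rho_{0:j-1}$ reweights the behaviour-policy law of the prefix $(S_0,A_0,\dots,S_j)$ to the target-policy law; this is where action-level coverage, $\mu(a|s)>0$ whenever $\pi(a|s)>0$, is needed so that the ratios are well defined.

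Summing over $j$ then yields $\EE_{\cD}[\hat d_{\pi,\gamma}(s)]=\sum_{j\ge0}\gamma^j\PP_{\pi}(S_j=s)=d_{\pi,\gamma}(s)/(1-\gamma)$ by the analytic form of the discounted stationary distribution in Equation \ref{eq:def_stationary}. The prefactor $(1-\gamma)$ cancels, leaving $\sum_s d_{\pi,\gamma}(s)f(s)=\EE_{S\sim d_{\pi,\gamma}}[f(S)]$, as claimed. The main obstacle is justifying the interchange of $\EE_{\mu}$ with the infinite sum over $j$ in the single-trajectory term: I would handle it by Tonelli's theorem, since every summand $\gamma^j\rho_{0:j-1}\mathds{1}[S_j=s]$ is nonnegative and the resulting series is bounded by $\sum_j\gamma^j\PP_{\pi}(S_j=s)\le 1/(1-\gamma)<\infty$. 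Assumption \ref{assump:bound_traj_len} ensures trajectories terminate almost surely, so that $\hat d_{\pi,\gamma}$ is a finite sum on each path and the $K$ trajectories are genuine i.i.d. samples, making the reduction to the single-trajectory expectation rigorous.
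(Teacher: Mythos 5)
Your proposal is correct and follows essentially the same route as the paper's proof: you rewrite $c_{\cD}(s)$ via the reformulation $g_n(s)f_n(s)$ from the consistency argument, observe that the empirical visit counts cancel pathwise to leave $(1-\gamma)\sum_s \hat d_{\pi,\gamma}(s)f(s)$, and then take the expectation trajectory-by-trajectory using the importance-sampling change of measure and Equation \ref{eq:def_stationary}. Your version is somewhat more careful than the paper's (the explicit Tonelli justification for interchanging $\EE_{\mu}$ with the sum over $j$, and the remark that the random transition count $n$ drops out before the outer expectation is taken), but the underlying decomposition and key steps are identical.
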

\vspace{-4mm}

\begin{proof}
    Denote the sampling distribution from the dataset as $\hat{d}(s) = \frac{\sum_{t=0}^{n-1}\mathds{1}[S_t=s]}{n}$.

    As proven in Corollary \ref{thm:density-ratio} in Equation \ref{eq:reformulate_RHS}, 
    \begin{equation}
        c_{\cD}(S) = \frac{1}{\hat{d}(s)} (1-\gamma)\frac{1}{K}\sum_{i=1}^K\sum_{j\ge 0}\mathds{1}[S^i_j=s] \gamma^{j}  \rho^i_{0:j-1}.
    \end{equation}

    Thus, $\EE_{S \sim \cD}\left [c_{\cD}(S) f(S)\right ] = \sum_{s\in\cS} f(s) (1-\gamma)\frac{1}{K}\sum_{i=1}^K\sum_{j\ge 0}\mathds{1}[S^i_j=s] \gamma^{j}  \rho^i_{0:j-1}$.

    After taking expectation over all $K$ trajectories, we have
    \begin{align}
        &\EE_{\cD} \left [ \sum_{s\in\cS} f(s) (1-\gamma)\frac{1}{K}\sum_{i=1}^K\sum_{j\ge 0}\mathds{1}[S^i_j=s] \gamma^{j}  \rho^i_{0:j-1} \right ]\\
        &= \sum_{s\in\cS} f(s) (1-\gamma) \sum_{j\ge 0} \gamma^j \PP_{\pi}(S_j=s)\\
        &= \sum_{s\in\cS} f(s)  d_{\pi,\gamma}(s).
    \end{align}
\end{proof}

\section*{Appendix B: Asymptotic Convergence}
We first introduce and prove the necessary lemmas. The proof of the convergence theorem is given in the second subsection.

\subsection*{Proof of the Required Lemma}
Denote the number of trajectories until step $t$ by $K(t)$. Recall our update rule is $d_{t+1} = d_t + \alpha_t (G_{t+1} d_t + g_{t+1})$ where $d_{t+1} = 
\begin{bmatrix}
    \theta_{t+1} \\
    \eta_{t+1}
\end{bmatrix}$, 
$G_{t+1} = 
\begin{bmatrix}
    -\phi(s_t) \phi(s_t)^\top - \lambda_1 I & -\lambda_2 H(1-\gamma) \phi(s_t)\\
    \lambda_2 H(1-\gamma) \phi(s_t) & -\lambda_2
\end{bmatrix}$, and 
$g_{t+1} = 
\begin{bmatrix}
    \phi(s_t) y_t \\
    -\lambda_2
\end{bmatrix}$.

\begin{lemma}
Define two matrices  $G = 
\begin{bmatrix}
    -\Phi^\top D_{\mu} \Phi - \lambda_1 I  & -\lambda_2 H (1-\gamma)\Phi^\top d_{\mu}\\
    \lambda_2 H (1-\gamma) d_{\mu}^\top \Phi & -\lambda_2
\end{bmatrix}$ and  $g = 
\begin{bmatrix}
    \frac{1}{(1-\gamma)\EE_{\mu}[T]} \Phi^\top D_{\mu} y\\
    -\lambda_2
\end{bmatrix}.$\\
When Assumption \ref{assump:bound_traj_len} and \ref{assump:convergence} are satisfied, we have
\begin{enumerate}
    \item 
        $\frac{1}{t+1} \sum_{k=0}^{t+1} G_k \to G \; \text{a.s. , and } \frac{1}{t+1} \sum_{k=0}^{t+1} g_k \to g \; \text{a.s. and in L$1$, as } t\to \infty.$
    \item The real parts of all eigenvalues of G are strictly negative.
\end{enumerate}
\label{thm:convergence_lemma}
\end{lemma}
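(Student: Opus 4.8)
The plan is to treat the two claims separately: item~1 is a law-of-large-numbers statement for the running averages of the random matrices $G_k$ and vectors $g_k$, while item~2 is a purely algebraic stability statement about the limiting matrix $G$, so it needs no probability at all. For item~1 I would argue entrywise, isolating the single genuinely hard term and handling everything else with the ergodic theorem.

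I would first dispose of the easy entries. The constant blocks ($-\lambda_1 I$, the $-\lambda_2$ entries, and the $-\lambda_2$ component of $g$) converge trivially. The entries that are bounded functions of the current state alone, namely the averages of $\phi(s_t)\phi(s_t)^\top$ and of $\phi(s_t)$, converge to $\Phi^\top D_\mu \Phi = \sum_s d_\mu(s)\phi(s)\phi(s)^\top$ and to $\Phi^\top d_\mu = \sum_s d_\mu(s)\phi(s)$ by the ergodic theorem for the finite irreducible chain induced by $\mu$ (irreducibility and finiteness come from Assumption~\ref{assump:convergence}, and recurrence across episodes from the restart mechanism). Since $\lVert\phi\rVert\le L$ by Assumption~\ref{assump:convergence}, these averages are uniformly bounded, so a.s.\ convergence upgrades to $L^1$ by bounded convergence. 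The only delicate entry is the top block of $g$, i.e.\ $\frac1n\sum_t \phi(s_t)y_t$ with $y_t = \gamma^{\textrm{time}_t}\rho_{\textrm{prod},t}$, because $y_t$ is not a bounded function of $S_t$ but depends on the within-trajectory history and on IS-ratio products that can be large.

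To handle this term I would exploit the i.i.d.\ structure across episodes. Writing $X_i = \sum_{j=0}^{T^i-1}\phi(S_j^i)\gamma^j\rho^i_{0:j-1}$ for the contribution of trajectory $i$, the $X_i$ are i.i.d.\ under $\mu$, and $\sum_t \phi(s_t)y_t = \sum_{i=1}^{K}X_i$ plus a boundary term from the final incomplete trajectory. A change of measure gives $\EE_\mu[X_i] = \sum_{j\ge0}\gamma^j \Phi^\top \PP_\pi(S_j=\cdot) = \tfrac1{1-\gamma}\Phi^\top d_{\pi,\gamma}$, while geometric discounting bounds $\EE_\mu\lVert X_i\rVert \le L\sum_j \gamma^j \PP_\pi(T>j)\le L/(1-\gamma)<\infty$, so the strong law for i.i.d.\ summands yields $\frac1K\sum_i X_i \to \tfrac1{1-\gamma}\Phi^\top d_{\pi,\gamma}$ both a.s.\ and in $L^1$ (sample means of integrable i.i.d.\ variables are uniformly integrable). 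Combining with $\frac{K}{n}\to \tfrac1{\EE_\mu[T]}$ a.s.\ (the renewal limit, finite by Assumption~\ref{assump:bound_traj_len}) and noting the boundary term is $o(n)$, the product converges to $\tfrac1{(1-\gamma)\EE_\mu[T]}\Phi^\top d_{\pi,\gamma}$, which equals $\tfrac1{(1-\gamma)\EE_\mu[T]}\Phi^\top D_\mu y$ since $D_\mu y = d_{\pi,\gamma}$. This matches the claimed top block of $g$.

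For item~2 the key observation is that the coupling between the $\theta$- and $\eta$-blocks is skew-symmetric. Writing $G = \bigl[\begin{smallmatrix} A & -b \\ b^\top & -\lambda_2\end{smallmatrix}\bigr]$ with $A = -\Phi^\top D_\mu \Phi - \lambda_1 I$ symmetric and $b = \lambda_2 H(1-\gamma)\Phi^\top d_\mu$, the symmetric part $\tfrac12(G+G^\top)$ is block-diagonal, the $b$-terms cancelling, leaving $\mathrm{diag}(-\Phi^\top D_\mu\Phi-\lambda_1 I,\,-\lambda_2)$. As $\Phi^\top D_\mu\Phi$ is positive semidefinite and $\lambda_1,\lambda_2>0$, this symmetric part is negative definite, and I would finish with the standard fact that a real matrix with negative-definite symmetric part has all eigenvalues of strictly negative real part: for any eigenpair $(\lambda,v)$, $\mathrm{Re}(\lambda)\lVert v\rVert^2 = \mathrm{Re}(v^* G v) = v^*\tfrac12(G+G^\top)v < 0$. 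The main obstacle will be item~1's hard term: because the targets $y_t$ carry unbounded IS-ratio products, the bounded-function ergodic theorem does not apply directly, and establishing a.s.\ and $L^1$ convergence requires assembling the i.i.d.-block decomposition, the change-of-measure evaluation of $\EE_\mu[X_i]$, the discounting moment bound, and the renewal limit $K/n\to 1/\EE_\mu[T]$, with care that the final incomplete trajectory is asymptotically negligible.
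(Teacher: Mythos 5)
Your proposal is correct and follows essentially the same route as the paper: the constant and bounded-feature entries via the ergodic theorem, the hard term $\frac{1}{n}\sum_t\phi(s_t)y_t$ via a law of large numbers over i.i.d.\ trajectory blocks combined with the renewal limit $K/n\to 1/\EE_\mu[T]$, and the eigenvalue claim via the quadratic form $x^*Gx$ whose skew-symmetric coupling terms cancel in the real part. The only (cosmetic) differences are that you group the hard term by trajectory where the paper groups by state, and you correctly invoke the strong law where the paper's text loosely cites the ``central limit theorem.''
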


\vspace{-4mm}

\begin{proof}
    \textbf{Let's prove the first point about almost sure convergence.} We will prove the convergence of each sub-matrix separately.

    Note that the ergodic theorem gives that the convergence of the top-left sub-matrix of $G_t$ as
    \begin{align*}
        &\frac{1}{t+1} \sum_{k=0}^{t+1} -\phi(s_k) \phi(s_k)^\top - \lambda_1 I \\
        &\xrightarrow{t \to \infty}  \sum_{s \in \cS} -d_{\mu}(s) \phi(s) \phi(s)^\top - \lambda_1 I \\
        &=-\Phi^\top D_{\mu} \Phi - \lambda_1 I.
    \end{align*}

    Similar convergence can be gained for the term $\lambda_2 H(1-\gamma) \phi(s_t)$ by the ergodic theorem as well.
    

    Combining these two results, we gain the almost sure convergence of $G_t$.

    Now we analyze the term $y_t\phi(s_t)=\sum_s \phi(s) \sum_{i \in I_s(t)} y_i$ where $I_s(t)$ denotes the showing up steps for a state $a$ until step $t$. 
    It can be further expressed as
    \begin{align*}
        &\frac{1}{t+1}\sum_{k=0}^t y_t\phi(s_t) \\
        &= \sum_s \frac{K(t)}{t+1} \phi(s) \frac{1}{1-\gamma}(1-\gamma) \frac{1}{K(t)} \sum_{i \in I_s(t)} y_i\\
        &\to \frac{1}{1-\gamma} \sum_s d_{\pi,\gamma}(s)\phi(s) \frac{1}{\EE_{\mu}[T]}\\
        &= \frac{1}{(1-\gamma)\EE_{\mu}[T]} \sum_s d_{\mu}(s) \frac{d_{\pi,\gamma}(s)}{d_\mu(s)}\phi(s).
    \end{align*}
    
    Note that the third line uses the convergence of $\frac{K(t)}{t+1}$ to $\frac{1}{\EE_{\mu}[T]}$.
    $K(t)$ goes to infinity as $t \to \infty$ since the recurrence time for the termination state has a finite expectation by Assumption \ref{assump:bound_traj_len}. Similarly, the ergodic theorem implies the almost sure convergence of $\frac{t+1}{K(t)}$ to $\EE_{\mu}[T]$.

    Furthermore, by central limit theorem, $(1-\gamma) \frac{1}{K(t)} \sum_{i \in I_s(t)} y_i$ converges to $d_{\pi,\gamma}(s)$. Since the limits of these two terms ar bounded, the limit of the product converges to the product of limits.

    For the L$1$-convergence, we analyze the expectation of $y_t\phi(s_t)$. Define $T_{K(t)}$ as the termination step of $K(t)$-th trajectory.
    \begin{align}
        &\frac{1}{t+1} \sum_{k=0}^{t+1} \EE_{\mu}[y_t\phi(s_t)] \\
        &= \frac{1}{t+1} \sum_{k=0}^{t+1} \gamma^t \PP_{\pi}(S_t=s) \phi(s)\\
        &=\sum_s \frac{K(t)}{t+1} \phi(s) \frac{1}{1-\gamma}(1-\gamma) \frac{1}{K(t)} [K(t) \sum_{j\ge 0} \gamma^j \PP_{\pi}(S_j=s) + \sum_{j=0}^{t- T_{K(t)}}\gamma^j \PP_{\pi}(S_j=s)]\\
        & \to \frac{1}{(1-\gamma)\EE_{\mu}[T]} \sum_s d_{\mu}(s) \frac{d_{\pi,\gamma}(s)}{d_\mu(s)}\phi(s).
    \end{align}

    Note that $\frac{1}{K(t)}\sum_{j=0}^{t- T_{K(t)}}\gamma^j \PP_{\pi}(S_j=s) \to 0$ as $K(t) \to \infty$ and $t \to \infty$.

    \textbf{Let's prove the second point about eigenvalues.}
    
    Let $\vartheta \in \mathbb{C}$, $\vartheta \neq 0$ 
    be a nonzero eigenvalue of $G$ with normalized eigenvector $x$, 
    that is $x^*x=1$, 
    where $x^*$ is the complex conjugate of $x$. 
    Hence, $x^*Gx=\vartheta$, $x\neq 0$. 
    Let $x^\top = (x_1^\top,x_2)$, 
    where $x_1 \in \mathbb{C}^d$ and $x_2 \in\mathbb{C}$. We can verify that 
    \begin{equation}
        \vartheta = -x_1^* (\Phi^\top D_\mu \Phi +\lambda_1 I) x_1+\lambda_2 x_2^*H(1-\gamma) d_\mu^\top \Phi x_1 - \lambda_2 x_1^*H(1-\gamma) d_\mu \Phi^\top x_2-\lambda_2 x_2^*x_2.
    \end{equation}

    Since $ d_\mu^\top \Phi$ is real, $\lambda_2 x_1^*H(1-\gamma) d_\mu \Phi^\top x_2 = (\lambda_2 x_2^*H(1-\gamma) d_\mu^\top \Phi x_1)^*$. It yields that the real part of their difference equals zero. Therefore, we have the real part of $\vartheta$, denoted by $\mathcal{R}(\vartheta)$, equals
    \begin{equation}
        \mathcal{R}(\vartheta) = -x_1^* (\Phi^\top D_\mu \Phi +\lambda_1 I) x_1-\lambda_2 x_2^*x_2.
    \end{equation}
    By the first point in Assumption \ref{assump:convergence}, we have $-x_1^* (\Phi^\top D_\mu \Phi +\lambda_1 I) x_1 \ge 0$, where the equality holds iff $x_1=0$. At least one of $\{x_1,x_2\}$ is nonzero. Consequently, we have $\mathcal{R}(\vartheta)<0$.
\end{proof}

\subsection*{Proof of the Convergence}

\begin{theorem}
Based on Assumption \ref{assump:bound_traj_len} and \ref{assump:convergence}, and Lemma \ref{thm:convergence_lemma}, we have
    \begin{equation}
        d_t \to -G^{-1} g \text{ a.s.}
    \end{equation}
which is the same fixed point for minimizing the following mean square error loss in Equation \ref{eq:mse_loss}.
\end{theorem}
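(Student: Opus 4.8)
The plan is to treat the recursion $d_{t+1} = d_t + \alpha_t(G_{t+1}d_t + g_{t+1})$ as a linear stochastic approximation scheme driven by Markovian noise and to analyze it through its associated mean-field ODE. The limiting objects $G$ and $g$ supplied by Lemma \ref{thm:convergence_lemma} define the ODE $\dot d = Gd + g$. Because part~2 of Lemma \ref{thm:convergence_lemma} already shows that every eigenvalue of $G$ has strictly negative real part, $G$ is invertible and Hurwitz, so this ODE has the unique equilibrium $d^\star = -G^{-1}g$, which is globally asymptotically stable. The entire argument then reduces to showing that the stochastic iterates asymptotically track this ODE and hence converge almost surely to $d^\star$.

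To separate signal from noise, I would rewrite one step as
\[
d_{t+1} = d_t + \alpha_t(Gd_t + g) + \alpha_t\big[(G_{t+1}-G)d_t + (g_{t+1}-g)\big].
\]
The first increment is the deterministic drift of the ODE, and the bracketed term is the Markovian perturbation. The essential input is part~1 of Lemma \ref{thm:convergence_lemma}: the Cesàro averages $\frac{1}{t+1}\sum_k G_k$ and $\frac{1}{t+1}\sum_k g_k$ converge to $G$ and $g$ (almost surely, and in $L^1$ for $g_k$). Following the emphatic-TD convergence analysis \citep{yu2015convergence}, I would use this ergodic averaging — rather than an i.i.d.\ or bounded-variance hypothesis — to show that the accumulated perturbations $\sum_t \alpha_t[(G_{t+1}-G)d_t + (g_{t+1}-g)]$ are asymptotically negligible. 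Here the discount factor does the crucial work: the regression target $y_t = \gamma^{\textrm{time}_t}\rho_{\textrm{prod},t}$ is an unbounded importance-sampling product, but the geometric factor $\gamma^{\textrm{time}_t}$ damps it enough to keep $\EE_\mu[\lVert g_{t+1}\rVert]$ finite, which is precisely what makes the $L^1$ statement in Lemma \ref{thm:convergence_lemma} hold.

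With drift and noise separated, two ingredients remain before a general stochastic-approximation theorem applies. The stepsize conditions $\sum_t \alpha_t = \infty$ and $\sum_t \alpha_t^2 < \infty$ come directly from part~4 of Assumption \ref{assump:convergence}. The other ingredient is almost-sure boundedness of the iterates, $\sup_t \lVert d_t\rVert < \infty$. I would establish this either by a stochastic Lyapunov argument using $V(d) = \lVert d - d^\star\rVert^2$, where the fact that $\mathcal{R}(x^* G x) < 0$ makes $V$ behave like a supermartingale up to summable noise, or through the scaled-ODE stability test: the scaled ODE $\dot d = Gd$ has the origin as its unique globally asymptotically stable point (again by the Hurwitz property), forcing the iterates to stay bounded. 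Combining boundedness, the ergodic averaging, and the global stability of $\dot d = Gd + g$ then yields $d_t \to d^\star = -G^{-1}g$ almost surely via the ODE method.

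Finally, I would identify $-G^{-1}g$ with the stated minimizer. Writing the first-order stationarity conditions of the regularized MSE loss in $(\theta,\eta)$ — differentiating the squared error, the two regularizers, and the Fenchel-dual distribution term — produces exactly the linear system $Gd + g = 0$, so its solution $-G^{-1}g$ is the regularized minimizer of the mean squared error to the true density ratio. The main obstacle I anticipate is the boundedness/stability step under \emph{unbounded} and Markov-dependent noise: the iterates are not driven by martingale differences with bounded variance, so the discount factor's control of the moments of $y_t$, together with the ergodic averaging of Lemma \ref{thm:convergence_lemma}, must be leveraged carefully to rule out escape to infinity before the ODE limit can be invoked.
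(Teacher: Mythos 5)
Your proposal follows essentially the same route as the paper: both treat the recursion as a linear stochastic approximation driven by Markovian noise, reduce everything to the two conclusions of Lemma \ref{thm:convergence_lemma} (Ces\`aro convergence of $G_t, g_t$ and the Hurwitz property of $G$), and invoke the ODE method following the ETD analysis --- the paper discharges the boundedness and unbounded-noise issues you flag not by a from-scratch Lyapunov argument but by verifying the growth conditions $g_1,\dots,g_4$ of Kushner--Yin's Theorem 6.1.1 and citing Yu's properties of the trace $y_t$ (finite first moments, forgetting of initial conditions, weak Feller). Your identification of $-G^{-1}g$ via the first-order stationarity conditions of the regularized MSE loss is in fact more explicit than the paper's closing remark that one simply repeats the argument with the new regression target.
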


The loss given a dataset $\cD$ to the true density ratio is written as 
\begin{align}
    \min_{\theta_{\text{mse}}}\mathcal{L}(\theta_{\text{mse}}; \cD) := \EE_{s_t \sim \cD} & \left [ \left (f_{\theta_{\text{mse}}}(s_t) - \frac{1}{(1-\gamma)\EE_{\mu}[T]} \frac{d_{\pi,\gamma}(s_t)}{d_{\mu}(s_t)} \right )^2 \right ] + \frac{\lambda_1 \lVert \theta_{\text{mse}} \rVert_2^2}{2} \nonumber\\
    &+ \lambda_2 \left (max_{\eta \in \RR} \EE_{s_t \sim \cD}[\eta  \frac{n}{K}(1-\gamma) f_{\theta_{\text{mse}}}(s_t) - \eta] -\frac{\eta^2}{2} \right ). 
    \label{eq:mse_loss}
\end{align}

\begin{proof}
First, we can verify some properties on our labels. Based on these properties and L$1$-convergence of $\frac{1}{t+1} \sum_{k=0}^tg_k$, we conclude that our label $(y_t,\phi(s_t) y_t)$ gives a unique invariant probability and is ergodic. 

The proof is the same as the corresponding proofs of (Yu, 2012, Theorem 3.2 and Prop. 3.2) for the case of off-policy LSTD.

\begin{enumerate}
    \item For any initial value of $\rho_{0:-1}$, $\sup_{t \ge 0 } \EE[\lvert (y_t,\phi(s_t) y_t) \rVert] < \infty$.
    \item Let $(y_t,\phi(s_t) y_t)$ and $(\hat{y_t},\phi(s_t) \hat{y_t})$ be defined by the same recursion and the same random variables, but with different initial conditions $\rho_{0:-1} \neq \hat{\rho}_{0:-1}$. Then, $y_t - \hat{y_t} \to 0 \textrm{ a.s.}$ and $\phi(s_t)y_t - \phi(s_t)\hat{y_t} \to 0 \textrm{ a.s.}$.
    \item $Z_t = (S_t, A_t, y_t, \phi(s_t) y_t)$ is a weak Feller Markov chain and bounded in probability.
\end{enumerate}
The proof follows ETD, since $y_t = \gamma^{\textrm{time}_t}\rho_{\textrm{prod},t}$ is a term in the ETD traces. For the second term, the difference between traces with different initializations for our correction and ETD is the same, so their proof also works here. The proof for the third claim follows the ETD paper.

Three conditions are required to use Theorem 6.1.1 in Kushner and Yin (2003) and follow the ETD proof (Theorem 4.1). Define $\xi_t = (y_t,S_t,A_t,S_{t+1})$ and $h(d,\xi_t) = G_t d + g_t$.

\begin{enumerate}
    \item \begin{equation}
        \frac{1}{t+1}\sum_{k=0}^t G_k \to G \textrm{ and } \frac{1}{t+1}\sum_{k=0}^t g_k \to g \textrm{ almost surely}.
    \end{equation}
    \item There exist nonnegative measurable functions $g_1(d)$, $g_2(\xi)$ such that $\lVert h(d,\xi) \rVert \le g_1(d)g_2(\xi)$ such that $g_1(d)$ is bounded on each bounded set, $\sum_{t \ge 0} \EE[g_2(\xi)] < \infty$, and $\frac{1}{t+1}\sum_{k=0}^t \left ( g_2(\xi_k) - \EE[g_2(\xi_k)]\right ) \to 0 \textrm{ almost surely}$. 
    \item There exist nonnegative measurable functions $g_3(d)$, $g_4(\xi)$ such that for each $d$ and $d'$, $\lVert h(d,\xi) -  h(d',\xi) \rVert \le g_3(d - d')g_4(\xi)$ such that $g_3(d)$ is bounded on each bounded set, $g_3(d) \to 0$ as $d \to 0$, $\sum_{t \ge 0} \EE[g_4(\xi)] < \infty$, and $\frac{1}{t+1}\sum_{k=0}^t \left ( g_4(\xi_k) - \EE[g_4(\xi_k)]\right ) \to 0 \textrm{ almost surely}$. 
\end{enumerate}

In our proof, the function $h(d,\xi)$ equals
\begin{equation}
    h(d,\xi) = \begin{bmatrix}
         -\phi(s) \phi(s)^\top - \lambda_1 I & -\lambda_2 H(1-\gamma) \phi(s_t)\\
    \lambda_2 H(1-\gamma) \phi(s_t) & -\lambda_2
    \end{bmatrix} d + \begin{bmatrix}
    \phi(s) y \\
    -\lambda_2
\end{bmatrix}.
\end{equation}

Then, for the second and third points, we first bound the norm of the matrix as followings.
\begin{align}
    &\lVert \begin{bmatrix}
         -\phi(s) \phi(s)^\top - \lambda_1 I & -\lambda_2 H(1-\gamma) \phi(s_t)\\
    \lambda_2 H(1-\gamma) \phi(s_t) & -\lambda_2
    \end{bmatrix}  \\
    &\le  \lVert \begin{bmatrix}
         -\phi(s) \phi(s)^\top - \lambda_1 I & 0\\
    0 & 0
    \end{bmatrix} \rVert + \lVert \begin{bmatrix}
         0 & -\lambda_2 H(1-\gamma) \phi(s_t)\\
    \lambda_2 H(1-\gamma) \phi(s_t) & -\lambda_2
    \end{bmatrix} \rVert 
   \\
    & \le
         \lVert -\phi(s) \phi(s)^\top - \lambda_1 I \rVert + \sqrt(2 \lVert \lambda_2 H(1-\gamma) \phi(s_t) \rVert^2)\lVert + \lambda_2
   \\
   & \le L^2+\lambda_1+\sqrt{2} \lambda_2 H(1-\gamma)L +\lambda_2 .
   \label{eq:matrix_norm}
\end{align}

\begin{align}
    \lVert h(d,\xi)\rVert & \le \lVert \begin{bmatrix}
         -\phi(s) \phi(s)^\top - \lambda_1 I & -\lambda_2 H(1-\gamma) \phi(s_t)\\
    \lambda_2 H(1-\gamma) \phi(s_t) & -\lambda_2
    \end{bmatrix} \rVert \lVert d \rVert + \lVert \begin{bmatrix}
    \phi(s) y \\
    -\lambda_2
    \end{bmatrix} \rVert \\
    & \le (L^2+\lambda_1+\sqrt{2} \lambda_2 H(1-\gamma)L +\lambda_2)\lVert d \rVert+ (\lVert \phi(s) y \rVert \lambda_2)\\
    & \le (L^2+\lambda_1+\sqrt{2} \lambda_2 H(1-\gamma)L +\lambda_2 + L y) (\lVert d \rVert+1).
\end{align}
Thus, $g_1(d) = (\lVert d \rVert+1)$ and $g_2(\xi)=L^2+\lambda_1+\sqrt{2} \lambda_2 H(1-\gamma)L +\lambda_2 + L y$.

We can bound the function norm using the matrix norm bound in Equation \ref{eq:matrix_norm}.
\begin{align}
    \lVert h(d,\xi) - h(d',\xi) \rVert & \le \lVert \begin{bmatrix}
         -\phi(s) \phi(s)^\top - \lambda_1 I & -\lambda_2 H(1-\gamma) \phi(s_t)\\
    \lambda_2 H(1-\gamma) \phi(s_t) & -\lambda_2
    \end{bmatrix} \rVert \lVert d-d' \rVert \\
    & \le (L^2+\lambda_1+\sqrt{2} \lambda_2 H(1-\gamma)L +\lambda_2 ) \lVert d-d' \rVert.
\end{align}
Thus, $g_3(d) = \lVert d \rVert$ and $g_4(\xi) = L^2+\lambda_1+\sqrt{2} \lambda_2 H(1-\gamma)L +\lambda_2 $ is a constant.

To show the fixed point is the same as minimizing the MSE to the true density ratio, we need to repeat the convergence proof for the new loss. But the only change is in the regression target and all other steps follow.
\end{proof}

\section*{Appendix C: Experimental Materials}

The hyperparameters of COP-TD are tuned the same as our method in the setting of dataset size $4000$, trajectory length $100$, discount factor $0.95$ and randomness coefficient $0.3$ for discrete-action tasks and $2.0$ for continuous-action tasks. Only one combination of the hyperparameters is used for all tasks.

For our algorithms, we test out the combination from parameter regularization coefficient $\lambda_1 \in [0, 0.001, 0.01, 0.1]$, distribution regularization parameter $\lambda_2 \in [0.5,2,10,20]$, and learning rate $\alpha \in [0.00005,0.0001,0.0005,0.001,0.005]$.

The neural network is set to be a two-hidden-layer neural network with hidden units $256$, which is the setting used by SR-DICE. The batch size is set the same as SR-DICE, equaling $512$.

The final choice of hyperparameters is shown in Table \ref{tab:hyperparameter}.

\begin{table}[h]
    \centering
    \begin{tabular}{c|c}
        \hline
         Parameter Regularizer $\lambda_1$ & 0.001 \\
         \hline
         Distribution Regularizer $\lambda_2$ & 0.5\\
         \hline
         Learning Rate & 0.0005 \\
         \hline
         Activation & ReLU \\
         \hline
    \end{tabular}
    \caption{}
    \label{tab:hyperparameter}
\end{table}

\subsection*{Ablation Study}
When training without the distribution regularization, the hyperparameters are also tuned with the regularization coefficient fixed, $\lambda_2=0$. Notice that the training step is much fewer than Figure \ref{fig:tune}. The results are plotted on a validation set. Without the distribution regularization, the ratio model is not learning except on CartPole.

\begin{figure}[t]
    \centering
    \vspace*{-6mm}
    \hspace*{-2mm}
    \includegraphics[width=1.1\textwidth
    ]{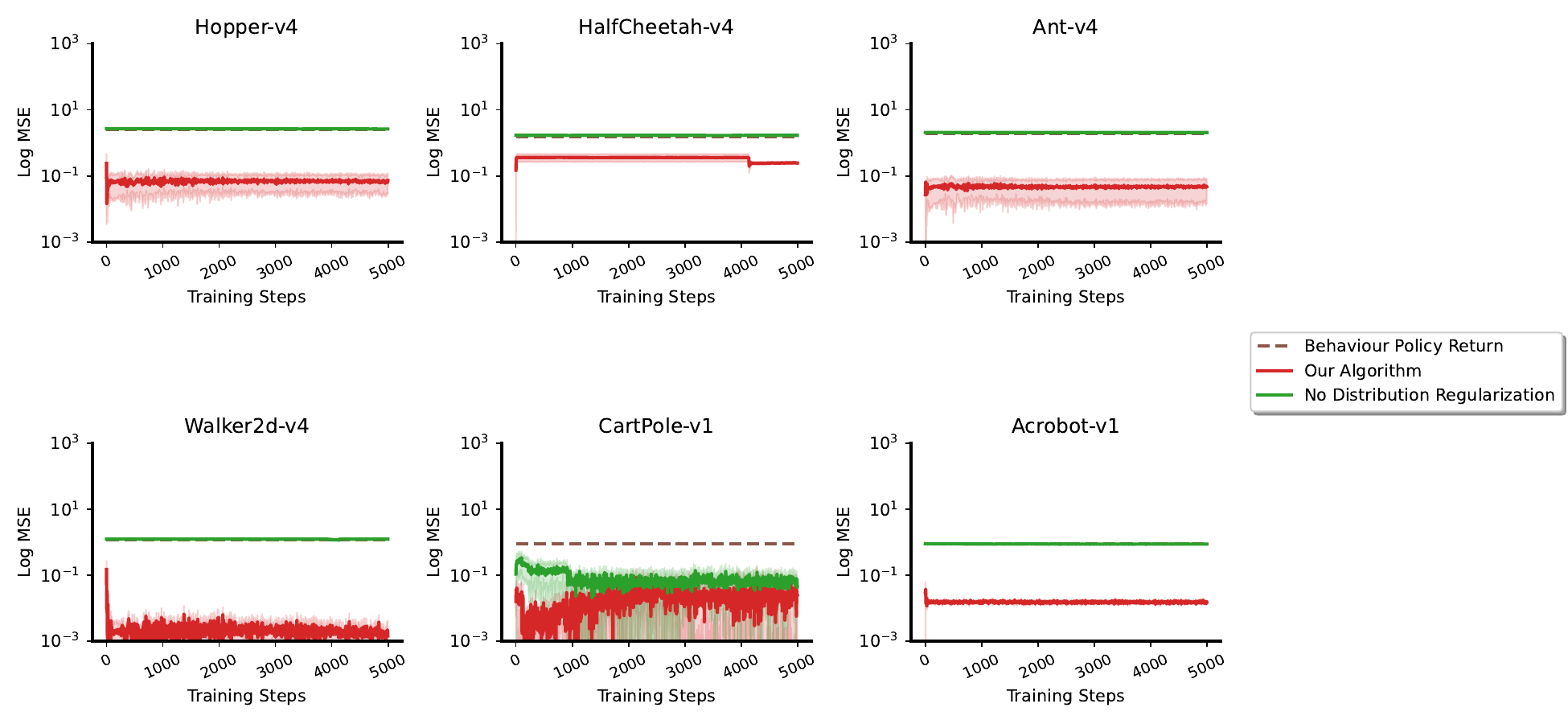}
    \vspace*{-2mm}
    \caption{This figure presents the mean square error of estimating the objective $J(\pi)$ in the log scale for each task. We present our method with and without the distribution regularization.}
    \label{fig:ablation}
    \vspace*{-6mm}
\end{figure}

\subsection*{Robustness}
The top-left subfigure of Figure \ref{fig:setting_walker} examines robustness against different discount factors. The main message is that our algorithm proves less robust to changes in the discount factor and would require re-tuning for each discount factor to achieve optimal performance. Our algorithm is robust to other changes except for a trajectory length that is too long. Results for other tasks are presented in Figure \ref{fig:setting_cartpole} and \ref{fig:setting_mujoco} and the conclusion holds for all tasks.

Turning to the bottom-right subfigure of Figure \ref{fig:setting_walker}, performance degrades when the trajectory length is set to $200$. A similar phenomenon is observed in Hopper and HalfCheetah. We propose two hypotheses for this drop. First, our method may struggle with very long trajectories, suggesting that users might benefit from truncating trajectories since longer horizons lead to heavily discounted and, thus, tiny labels. Second, the total dataset size is fixed at $4,000$, so increasing the length of each trajectory decreases the number of available trajectories. The approximation error decays sublinearly with the number of trajectories; thus, fewer trajectories can hinder performance.

\begin{figure}[ht]
    \centering
    \hspace*{-6mm}
    \begin{subfigure}[b]{0.45\textwidth}
         \centering
         \includegraphics[width=\textwidth]{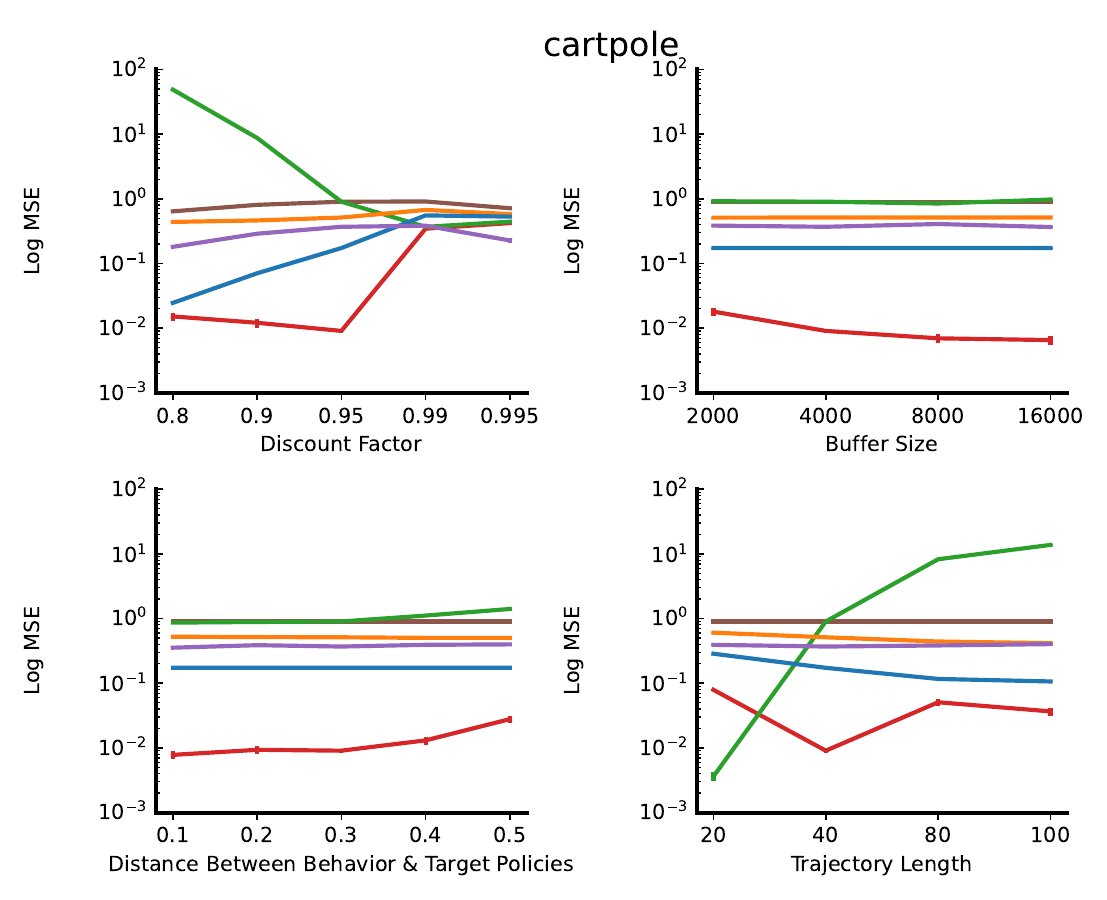}
         \caption{}
    \end{subfigure}
    \hfill
    \begin{subfigure}[b]{0.55\textwidth}
         \centering
         \includegraphics[width=\textwidth]{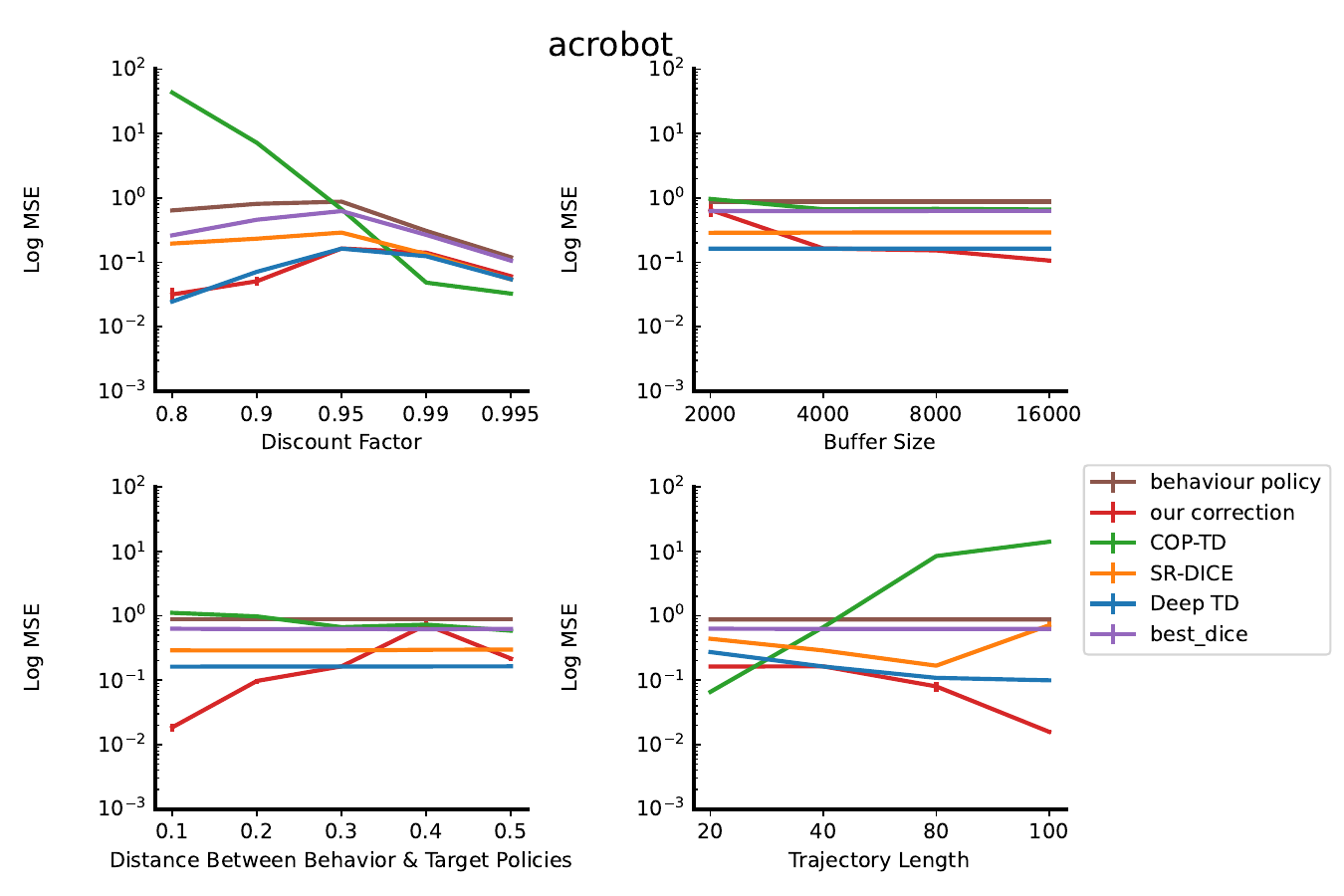}
         \caption{}
    \end{subfigure}
    \caption{This figure shows the robustness results on discrete-action tasks.}
    \label{fig:setting_cartpole}
    \vspace*{-4mm}
\end{figure}

\begin{figure}[ht]
    \centering
    \vspace*{-2mm}
    \hspace*{-2mm}
    \includegraphics[width=0.75\textwidth
    ]{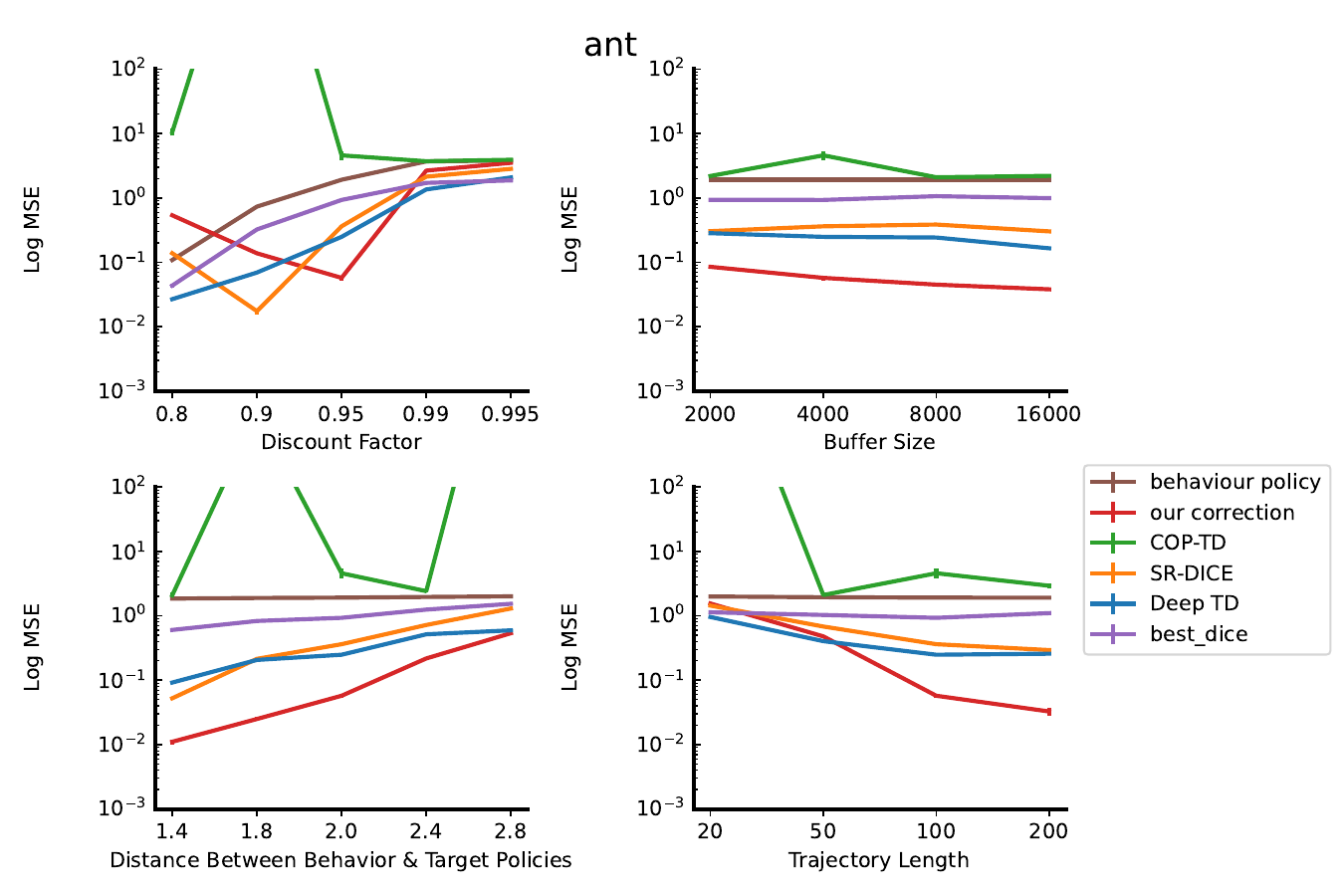}
    \includegraphics[width=0.75\textwidth
    ]{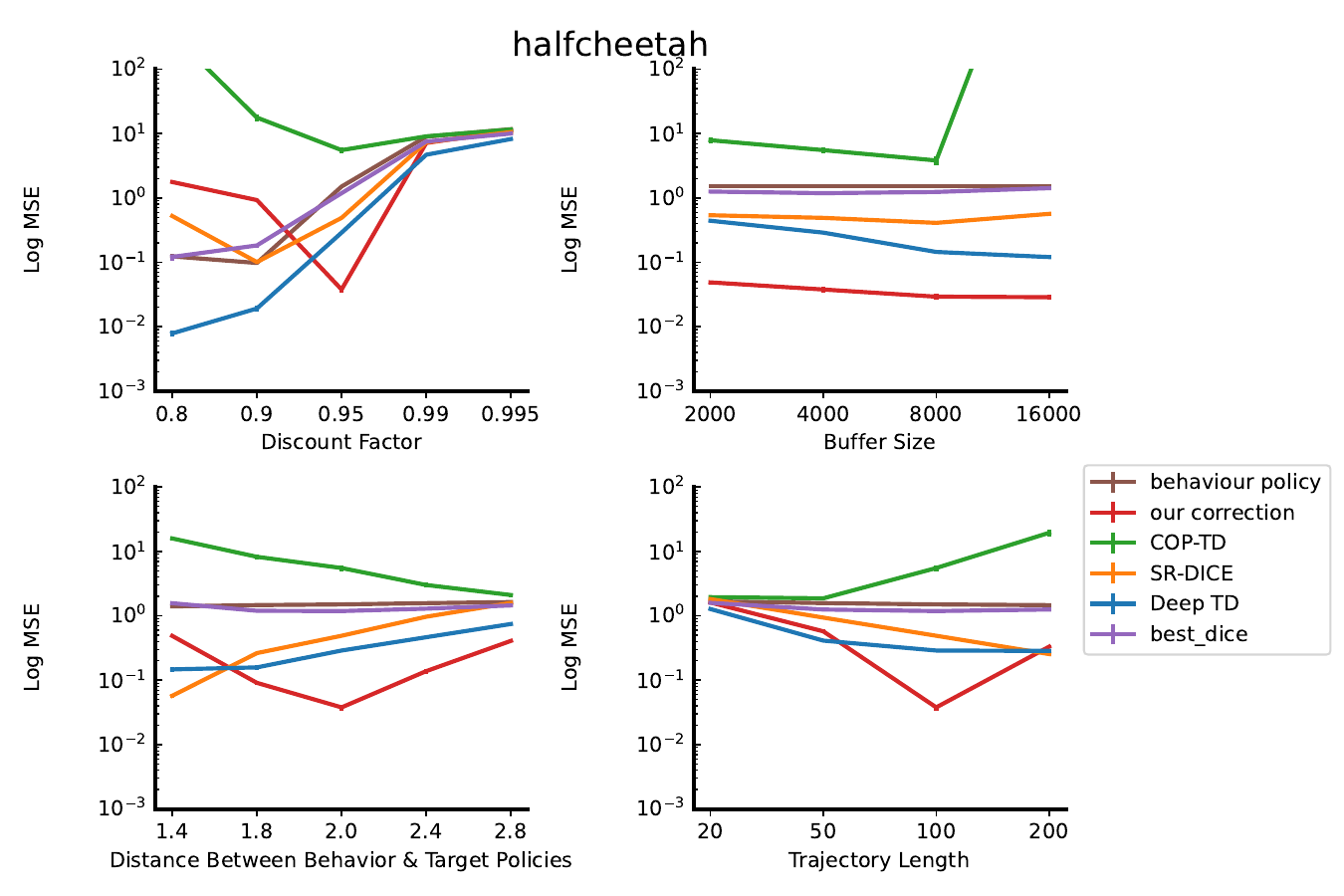}
    \includegraphics[width=0.75\textwidth
    ]{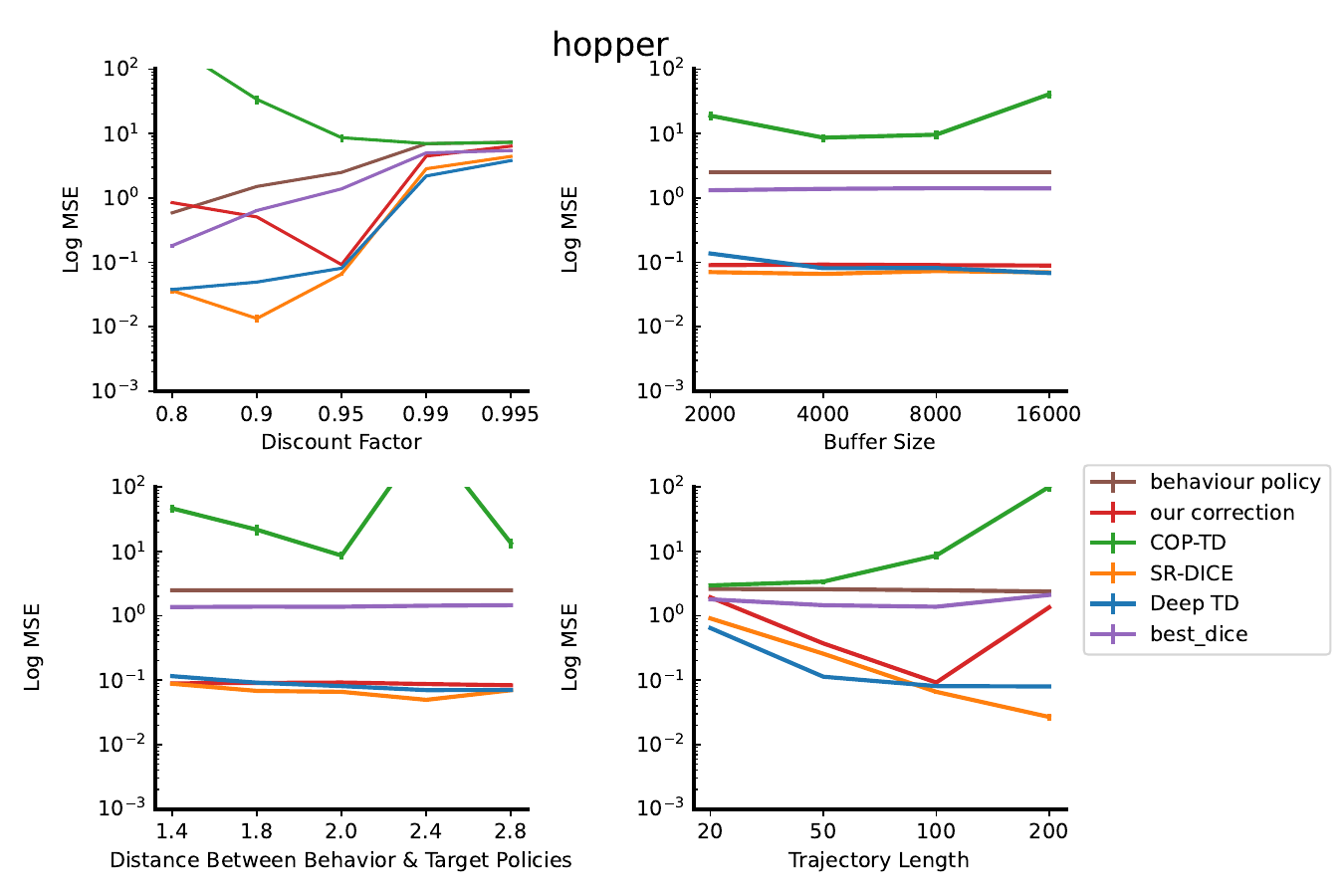}
    \caption{This figure shows the robustness results on continuous-action tasks.}
    \vspace*{-4mm}
    \label{fig:setting_mujoco}
\end{figure}

\end{document}